\newtheorem{te}{Theorem}
 \newtheorem{pr}{Example}[section]
\def\mathbi#1{\textbf{\em #1}}
\begin{document}

\noindent

 \title[$2D$  moment invariants]{ 2D  moment invariants from the point of view of the classical  invariant theory }

\author{Leonid Bedratyuk}\address{Khmelnitskiy national university, Insituts'ka, 11,  Khmelnitskiy, 29016, Ukraine}

\begin{abstract} %В даній статті ми  розглядаємо задачу  опису 2D  moment invariants  як задачу класичної теорії інваріантів. 

  Invariants allow to classify images  up to the action of a group of transformations. In this paper we introduce   notions of the algebras of simultaneous polynomial and rational 2D moment invariants and 
 prove that   they are isomorphic  to the algebras of joint  polynomial and rational $SO(2)$-invariants of  binary forms. 
Also, to simplify the calculating of invariants we pass from an action of Lie group $SO(2)$ to  an action of its Lie algebra $\mathfrak{so}_2$.  
This allow us to reduce the problem to standard  problems of the classical invariant theory.

\end{abstract}

\maketitle
%{\bf Keywords: }
%Moment invariants; Rotation group, Classical invariant theory;  Binary forms; Derivations; Poincar\'e series, Kravchuk polynomials \\
%{\bf 2000 MSC} :{ \it  13N15;13A50;16W25 } \\

\section{Introduction}

Let  $\mathbi{F}$ be a set of   real finite piecewise continuous functions  that  can have nonzero values only in 
 a compact subset of  $\mathbb{R}^2.$  Let a transformation group $G$ acts on  $\mathbb{R}^2$. This action induces an   action on $\mathbi{F}$  by the rule
$$g \cdot f(u)=f(g^{-1}u), g \in G, f \in \mathbi{F}, u \in \mathbb{R}^2.
$$
 The orbit of a function  $f \in \mathbi{F}$  is defined as  the set  $ \{ g \cdot f \mid g \in G\}$. Then   \mathbi{F} is a disjoint union of orbits.    

A problem of great practical interest in applications is the following classification problem:
given two functions $f_1, f_2 \in F$, are they   in  the same $G$-orbit?  
A common approach to this problem, see  \cite{Ri}-\cite{KM},  is to use invariant functionals.
A functional  $I:\mathbi{F} \to \mathbb{C}$  is called  \textit{invariant  } with respect to the group  $G$  ( or $G$-invariant) if 
$$
I(g \cdot f)=I(f), \forall g \in G, \forall f \in \mathbi{F}.
$$
 The $G$-invariant functionals being  closed under addition, multiplication, scalar multipliplication  and division  form two algebraic objects --  the algebra of polinomial invariants    $\mathbb{C}[F]^G$ and the field of rational invariants     $\mathbb{C}(F)^G.$
 
It is easy to see that the invariant functional remains constant on the orbits. The set of invariant functionals   $I_1, I_2, \ldots$ is called  \textit{complete } if they  separate the orbits, i.e. for all   $f_1, f_2 \in \mathbi{F}$ the equalities 
$$
I_k(f_1)=I_k(f_2), k=1,2,\ldots
$$
imply that  $f_1, f_2 $ lie in the same orbit and  then we have that  $f_1=g f_2$ for some $g \in G.$
So,  this classification problem is reduced to problems  of choosing   suitable functionals and then to deriving  of a complete set of invariants.

The  orientation-preserving transformation group  $G$  is widely used in    2D image analysis. The group     is  the semi-direct product of the plane translation group  $TR(2)$, the   direct product of the  plane rotation group $SO(2)$  and the  uniform scaling  group $\mathbb{R}^*$:   
$$
G=(R^* \times SO(2)) \rtimes TR(2) 
$$
Also,    $G$ is a four-parametric group with the following action on   $(x,y) \in \mathbb{R}^2:$ 
$$
(x,y)^{\top} \mapsto  s \begin{pmatrix}  \cos\theta & -\sin\theta\\ \sin\theta &\cos\theta \end{pmatrix} \begin{pmatrix}  x\\ y \end{pmatrix}+\begin{pmatrix}  a\\ b \end{pmatrix}.
$$

The introduction  of the notion of  image moment invariants by Hu in his significant paper \cite{Hu}  was a   
vivid  example of the application of the classical invariant theory to the pattern recognition.
  	He proposed to consider as functionals  a well-known statistical moments.
		More specifically,  Hu have  considered  \textit{ the geometric moments } of  $f \in \mathbi{F}:$

 \begin{gather*}
m_{pq}(f(x,y))=m_{pq}=\iint\limits_{\Omega} x^p y^q f(x,y) dx dy, \Omega \subset \mathbb{R}^2.
\end{gather*}

and the \textit{central geometric moment}

$$
\mu_{pq}(f(x,y))=\mu_{pq}=\iint\limits_{\Omega} (x-\bar{x})^p (y-\bar{y})^q f(x,y) dx dy,
$$
where
$$
\bar{x}=\frac{m_{10}}{m_{0,0}},\bar{y}=\frac{m_{01}}{m_{0,0}}.
$$
The central geometric moments are already invariants under the translation group.  After the  normalization 
$$
\eta_{p,q}=\frac{\mu_{p,q}}{\mu_{0,0}^{1+\frac{p+q}{2}}}, p+q \geq 2,
$$
they become invariants  of the scaling group. Therefore, the problem of determining $G$-invariants  can be  reduced to  problem of finding $SO(2)$-invariants as functions of the normalized central geometric moments. We will  consider two types of such functions -- polynomials and rational functions.

Let  $\mathbb{C}[\eta]$ be a polynomial algebra  in  countable many  variables $\{\eta_{p,q}\}$ considered with the action of the group  $SO(2).$
Denote  by   $\mathbb{C}[\eta]^{SO(2)}$ and  $\mathbb{C}(\eta)^{SO(2)}$ the corresponding algebras of \textit{polynomial}  and  \textit{rational  moment }invariants, respectively.  Since these algebras are not finitely generated, then  a complete set of invariants consists of infinitely many invariants. However, these algebras can be approximated by the finitely generated algebras $\mathbb{C}[\eta]_d^{SO(2)}$ and  $\mathbb{C}(\eta)_d^{SO(2)}$where $[\eta]_d=\{ \eta_{p,q},2 \leq p+d \leq d\} $. The elements of these algebras are called the \textit{simultaneous}  2D moment  invariants   of \textit{order} up to $d.$ For small $d$ its  generators   already known. For example, the algebras $
\mathbb{C}[\eta]_{2}^{SO(2)} 
$ and $\mathbb{C}(\eta)_{2}^{SO(2)}$
generated by the two invariants
$$ \eta_{2,0}+\eta_{0,2},(\eta_{2,0}-\eta_{0,2})^2+4\eta_{1,1}^2.$$
For $d=3$  the algebra $\mathbb{C}(\eta)_{3}^{SO(2)}$ is generated by 11 simultaneous  moment  invariants. In the paper we prove that  algebra $\mathbb{C}[\eta]_{3}^{SO(2)}$  is generated by 14 simultaneous  moment  invariants.

It is surprising that in the general case the problem can be reduced to well-known problems of the classical invariant theory.  It turns out,  the algebra  $\mathbb{C}[\eta]_d^{SO(2)}$ is isomorphic  ( see {Theorem~1}) to the algebra  of \textit{joint $SO(2)$-invariants} of  some  system of binary forms.  
There is a more general result, namely  the \textit{fundamental  theorem  of  moment  invariants}, see \cite{Hu} and \cite{RR}, \cite{MM}, which  can be reformulated as follows: for any linear algebraic group $G \subseteq GL(2)$ considered with the action on $\mathbb{C}(\eta)$  the simultaneous moment invariants  algebra   $\mathbb{C}(\eta)_d^{G}$ is isomorphic to the  algebra  $\mathbb{C}(W_d)^{G}$ of the \textit{joint invariants} several binary forms. Here  $W_d$ is the following direct sum  
$$
W_d=V_2 \oplus V_3 \oplus \cdots \oplus V_d, 
$$
where  $V_k$  is the vector $G$-space  of  binary forms of order  $k$.

   An explicit description   of the  algebras of invariants of  binary forms   was an  important  object of research  in  invariant theory in  the 19th century. 
 From the point of view of  the classical invariant theory the problem of  description  of the algebras $\mathbb{C}[\eta]_d^{SO(2)}$,  $\mathbb{C}(\eta)_d^{SO(2)}$ can thus be formulated as follows.
Let the group $SO(2)$  acts by linear changes of variables on the vector space  $V_d$.  
Since   $SO(2)$ is a \textit{compact group}, 
the  $\mathbb{C}[W_d]^{SO(2)}$ is the finite generated algebra.  Moreover, since  $SO(2)$ is a \textit{reductive group}, the invariants of  $\mathbb{C}[\eta]_d^{SO(2)}$ (and $\mathbb{C}(\eta)_d^{SO(2)}$)   separate orbits, thus a complete  set of invariant always exists and is finite.    Otherwise speaking, if the images  $f_1, f_2 \in \mathbi{F}$  do not lie in the same  $G$-orbit then for some $d $ there exists an invariant  $\textit{I} \in \mathbb{C}[\eta]_d^{SO(2)}$ ( or $\mathbb{C}(\eta)_d^{SO(2)}$)   such that  $\textit{I}(f_1) \neq \textit{I}(f_2).$

The questions then arise:
\begin{itemize}
	\item 
\textbf{Problem~1.}  What is  a minimal generating set of  the algebra polynomial invariants  $\mathbb{C}[W_d]^{SO_2}?$

\item 
\textbf{Problem~2.} What is  a minimal generating set of  the algebra rational  invariants  $\mathbb{C}(W_d)^{SO(2)}$?
\end{itemize}

Problem~1 was solved partially by Hu for the case $d=3.$
Note that Hu used to  the two rotation invariants of quartic and  the four  invariants of cubic which were already derived in \cite[p.378]{Ell}. %The seventh joint  invariant for the quadratic and binary forms he derived     for the first time.  Also, in  \cite{Ell}  the five  rotation invariant of binary form of degree 4 were  found and for the binary forms of degrees 5, 6 the numbers of elements in a minimal generating  set was indicated. 

Many authors  tried to solve Problem~2  with varying degrees of accuracy and mathematical rigor, see \cite{Hu}-\cite{FSB}. For the first time  it was solved  by Flusser \cite{FF} in  a satisfactory way. The author presented a minimal generating set of  $\mathbb{C}(W_d)^{SO(2)}$ in the complex moment terms for arbitrary $d.$

In the present  paper, the problem  of finding  invariants of the Lie group  $SO(2)$  is reformulated  in terms  of its Lie algebra  $\mathfrak{so}_2.$  Since, the Lie algebra  $\mathfrak{so}_2$  is one-dimensional and it acts on  basis vectors $a_{k,j}$ of the dual vector space $W_d^*$ by the derivation  $D:$
$$
D=\sum_{k+j=2}^d \left( j a_{k+1,j-1}- k a_{k-1,j+1} \right) \frac{\partial}{a_{k,j}},
$$
then the problem of explicit description of  $\mathbb{C}(W_d)^{\mathfrak{so}_2}$ can be reduced   the problem of solving    a  partial differential  equation. 

The paper is arranged as follows. 

In  Section~2, we  review  basic concepts of the classical invariant theory. Also, we prove that  the algebras  $\mathbb{C}(\eta)_d^{SO(2)}$ and  $\mathbb{C}(W_d)^{\mathfrak{so}_2 }$ are  isomorphic.

In Section~3,   we prove that the derivation  $D$ is a diagonalizable linear  operator   on the dual vector space  $V_d^*$ with the spectrum  $ \{ -d i,-(d-2)i,-(d-4)i,\ldots, (d-2)i,d i \}$ and that for each  eigenvalue  $\pm s i$ the associated eigenvector  $\mathbi{e}_{d}({\pm s i})$ has the form $$\text{  {\rm  $\mathbi{e}_{d}({\pm s i})$}}=\sum_{j=0}^d i^j \mathcal{K}_j\left(\frac{1}{2}(d \pm s),d \right)a_{d-j,j},$$

where $$
\mathcal{K}_n(x,a)=\sum_{j=0}^n (-1)^j {x \choose j} {a-x \choose n-j},
$$
is   the binary Kravchuk polynomial  and $i$ is the  imaginary unit. In particular, we show that the complex moments also are   eigenvectors of  $D.$

In  section~4, by using the notion of Hilbert basis for the case $d \leq 5$  we present a minimal generating system for the algebra of rotation polynomial invariants   $\mathbb{C}[W_d]^{\mathfrak{so}_2}$.  
Namely, we show    that for $d=3$ its  minimal generating set  consists of   14  invariants, for $d=4$ it consists of  65 invariants and for $d=5$ it consists of  562 invariants. % Also, we calculate  the number elements of  minimal generating sets for the algebra  $\mathbb{C}[V_d]^{\mathfrak{so}_2}$   for $d \leq   12.$

In Section~5,  for $\mathbb{C}[W_d]^{\mathfrak{so}_2}$ we prove the analogue of the  classical  Cayley-Sylvester  formula, namely   
the number of linearly independent homogeneous polynomial rotation  invariants  of degree  $n$ is equal to the number of   non-negative integer solutions of the system of equations: 
$$
\begin{cases} \displaystyle 
\sum_{k=1}^{2d}k(\alpha_{1}^{(k)}+\alpha_{2}^{(k)}+\cdots+\alpha_{l_k}^{(k)})=dn,\\
\displaystyle  \sum_{k=0}^{2d}(\alpha_{1}^{(k)}+\alpha_{2}^{(k)}+\cdots+\alpha_{l_k}^{(k)})=n,
\end{cases}
$$
where $\alpha_{j}^{(k)}$ are the unknowns. 
Also we  derive a formula for corresponding Poincar\'e series.

In  Section~6,  we prove that the number of elements in a minimal generating set of the algebra rational invariants $\mathbb{C}(W_d)^{\mathfrak{so}_2}$ equals $\dim W_d-1$. Also, we present such a minimal generating set, namely  the set 
$$
G^{(d)}_{p,q}=\left\{\mathbi{e}_{2j}(0), \mathbi{e}_{n}(si)\mathbi{e}_{n}(-si), \mathbi{e}_n(si)^q{e_p(-q i)^{s}} \mid 2 \leq n \leq d, 2j \leq d, s i \in \Lambda_n,  s >0, q \neq s \right\},
$$
 is a minimal generating set of the algebra  $\mathbb{C}(W_d)^{\mathfrak{so}_2}$ for   suitable parameters $p,q.$  Also, we express the invariants $\mathbi{e}_{2j}(0)$ and $ \mathbi{e}_{n}(si)\mathbi{e}_{n}(-si)$ in terms of the Kravchuk polynomials.

%In Section 7, we present a Python program for computation of rotation invariants. 

%======================================================

\section{Basic notions  of  the invariant theory }

%================================================

Classical invariant theory is centered around the action of the general linear group on homogeneous
polynomials, with an emphasis on binary forms.

Let us recall that the space of \textit{binary forms} of degree $d$ is the vector space:
$$
V_d=\left\{ \sum_{k+j=d} \binom{d}{k} a_{k,j} x^k y^j \mid  a_{k,j} \in \mathbb{C} \right\}.
$$

The linear functions  
$$
 \sum_{k+j=d} \binom{d}{k} a_{k,j} x^k y^j \mapsto a_{k,j}
$$
form a basis of the dual vector space  $V_d^*$. For convenience, it is useful to  identify the function and  the coefficient $a_{k,j}$.
Moreover, these functions generate an algebra. The algebra  is called the \textit{algebra of polynomial functions} on    $V_d$ and  denoted by  $\mathbb{C}[V_d].$ Thus,  any element of $\mathbb{C}[V_d]$ is just a polynomial of  variables $\{a_{d,0},a_{d-1,1},\ldots,a_{0,d}\}.$   The algebra   $\mathbb{C}[V_d]$  is \textit{$\mathbb{N}$-graded algebra}:
$$
\mathbb{C}[V_d]=(\mathbb{C}[V_d])_0 \oplus (\mathbb{C}[V_d])_1  \oplus \cdots, (\mathbb{C}[V_d])_s \cdot (\mathbb{C}[V_d])_t \subseteq (\mathbb{C}[V_d])_{s+t},
$$
where each  $(\mathbb{C}[V_d])_k$  is the  vector space of all homogeneous polynomials of degree $k.$

Let  $GL(V_d)$ be the group of all invertible linear transformations of 
$V_d.$  If  $g \in GL(V_d)$, $F \in \mathbb{C}[V_d]$ define a new polynomial function  $ g \cdot F \in \mathbb{C}[V_d]$ by 
$$
(g \cdot F)(v)=F\left(  g^{-1} v\right).
$$
If $G$ is subgroup of  $GL(V_d)$ we say that   $F$ is  \textit{$G$-invariant} if $ g \cdot F=F$ for all  $g \in G.$
The $G$-invariant polynomial functions form a graded 
subalgebra $\mathbb{C}[V_d]^G$ of $\mathbb{C}[V_d]$. The algebra $\mathbb{C}[V_d]^G$  is called the \textit{algebra of the polynomial $G$-invariants}  of  the vector space $V_d$.

Let us recall that \textit{a derivation} of a ring   $R$ is an additive  map  $L$ satisfying the Leibniz rule: 
$$
L(r_1 \, r_2)=L(r_1) r_2+r_1 L(r_2), \text{  for all }  r_1, r_2 \in R.
$$
The subring 
$$
\ker L:=\left \{ f \in R|  L(f)=0 \right \},
$$
is called \textit{the kernel} of the derivation $L.$

Let  $G$ be  a simply connected Lie group acting on $V_d$  ( and on $\mathbb{C}[V_d]$)   and  let $\mathfrak{g}$ be its  Lie algebra. It is well-known that the algebra  
$\mathfrak{g}$  acts on  $\mathbb{C}[V_d]$ by derivations and  $ F \in \mathbb{C}[V_d]^G$  implies that  $L(F)=0$, $\forall L \in \mathfrak{g}.$ Thus 
$$
\mathbb{C}[V_d]^G=\mathbb{C}[V_d]^\mathfrak{g}.
$$ 
As a linear object, a Lie algebra is often a lot easier to work with than working directly with the corresponding Lie group.

Let us consider two important examples.

\begin{pr}{\rm Let  $SL(2)$ be the group of   $2 \times 2$-matrix  with determinant  1 equipped with the natural action on   
$V_d$. The matrices with zero trace  
$$
\begin{pmatrix} 0 & 0 \\ 1 & 0  \end{pmatrix}, \begin{pmatrix} 0 & 1 \\ 0 & 0  \end{pmatrix},
$$
generate the Lie algebra  $\mathfrak{sl}_2$ and act on  $V_d$ by the derivations
$$
-y \frac{\partial }{\partial x}, -x \frac{\partial }{\partial y}.
$$
and act on  $\mathbb{C}(V_d)$ by the derivations
\begin{gather*}
D_{+}=\sum_{i+j=d} ja_{k+1,j-1}\frac{\partial}{a_{k,j}},\\
D_{-}=\sum_{i+j=d} ka_{k-1,j+1}\frac{\partial}{a_{k,j}}.
\end{gather*}
The polynomial solutions of the corresponding system of differential equations generate the algebra 
 $\mathbb{C}[V_d]^{\mathfrak{sl}_2}$.  For instance, for  $d=2$ the algebra of invariants generated by the discriminant of binary form   $a_{2,0}x^2+2 a_{1,1}xy+a_{0,2}y^2$:
$$
\mathbb{C}[V_2]^{\mathfrak{sl}_2}=\ker D_+ \cap \ker D_-=\mathbb{C}[a_{1,1}^2-a_{2,0}a_{0,2}].
$$
It is easy to check that 
$$
D_{+}(a_{1,1}^2-a_{2,0}a_{0,2})=D_{-}(a_{1,1}^2-a_{2,0}a_{0,2})=0.
$$
The  minimal generating systems  of $\mathbb{C}[V_d]^{\mathfrak{sl}_2}$ were  a major object of research in classical invariant theory of the 19th century. At present, 
such generators have been found only for $d \leq 10.$ 

}
\end{pr}

\begin{pr}\label{so2}{\rm  Let $$SO(2)=\left\{ \begin{pmatrix}  \cos\theta & -\sin\theta\\ \sin\theta &\cos\theta \end{pmatrix} \mid  \theta \in \mathbb{R} \right\},$$ be the special orthogonal group. This group is also called the \textit{rotation group}.   $SO(2)$ is connected, compact subgroup of the group  $SL(2)$. The corresponding Lie algebra   $\mathfrak{so}_2$  is generated by the matrix
$$ 
\frac{d }{d \theta} \begin{pmatrix}  \cos\theta & -\sin\theta\\ \sin\theta &\cos\theta \end{pmatrix}^{-1} \Bigg|_{\theta=0} = \left( \begin{array}{cc}
0  & 1 \\
-1 &  0 \end{array} \right),
$$
and acts on  $\mathbb{C}(V_d)$ by derivation
$$
D=\sum_{k+j=d} \left( ja_{k+1,j-1}\frac{\partial}{a_{k,j}}- ka_{k-1,j+1}\frac{\partial}{a_{k,j}} \right)=D_{+}-D_{-}
$$
see for example \cite[page 350]{Ell}  and \cite{FS}.
We have 
\begin{align*}
&\mathbb{C}[V_1]^{\mathfrak{so}_2}=\ker D=\mathbb{C}[a_{1,0}^2+a_{0,1}^2],\\
&\mathbb{C}[V_2]^{\mathfrak{so}_2}=\ker D=\mathbb{C}[a_{2,0}+a_{0,2},(a_{2,0}-a_{0,2})^2+4a_{1,1}^2].
\end{align*}
 }
\end{pr}

Note that  $\mathbb{C}[V_d]^{\mathfrak{sl}_2} \subset \mathbb{C}[V_d]^{\mathfrak{so}_2},$ thus any $\mathfrak{sl}_2$-invariant of binary form also is a rotation invariant.

In the similar manner we define an action of   $\mathfrak{so}_2$  on the direct sum 
$$
W_d=V_2 \oplus V_3 \oplus V_4 \oplus \cdots \oplus V_d.
$$
The corresponding algebras of polynomial and rational invariants are denoted by  $\mathbb{C}[W_d]^{\mathfrak{so}_2}$  and  $\mathbb{C}(W_d)^{\mathfrak{so}_2}$. The number $d$ is called the \textit{order} of an invariant.

 Let us now  return to the moment invariants. 
Observe, that the invariants of  $\mathbb{C}[V_2]^{\mathfrak{so}_2}$  have the same form as the first two Hu's moment invariants. For the  general case the following statement holds.

\begin{te} The algebras of polynomial and rational moment invariants $\mathbb{C}[\eta]_d^{SO(2)}$ and  $\mathbb{C}(\eta)_d^{SO(2)}$ are isomorphic to the algebras  $\mathbb{C}[W_d]^{\mathfrak{so}_2}$  and   $\mathbb{C}(W_d)^{\mathfrak{so}_2}$ respectively.
\end{te}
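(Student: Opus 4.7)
The plan is to exhibit an explicit $\mathbb{C}$-algebra isomorphism $\Phi\colon \mathbb{C}[W_d] \to \mathbb{C}[\eta]_d$ defined on generators by $\Phi(a_{k,j}) = \eta_{k,j}$ for $2 \leq k+j \leq d$ and extended multiplicatively, to show it is $SO(2)$-equivariant, and then to restrict to invariant subalgebras and pass to fraction fields. Since both $\mathbb{C}[W_d]$ and $\mathbb{C}[\eta]_d$ are polynomial rings over the same finite index set $\{(k,j) : 2 \leq k+j \leq d\}$, the map $\Phi$ is immediately an isomorphism of abstract $\mathbb{C}$-algebras; the substance of the theorem is equivariance.

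The central step, and the main obstacle, is verifying $SO(2)$-equivariance. The natural device is the moment-generating binary form
$$\mathcal{M}_n(\alpha,\beta;f) = \iint_{\Omega}(\alpha x + \beta y)^n f(x,y)\,dx\,dy = \sum_{k+j=n}\binom{n}{k}\mu_{k,j}(f)\,\alpha^k\beta^j.$$
Replacing $f$ by $g \cdot f$ for $g = R_\theta \in SO(2)$ and changing variables $u = R_\theta^{-1}(x,y)$ in the integral --- which preserves area, so $\mu_{0,0}$ and hence the normalization factors defining the $\eta_{k,j}$ are invariant --- reveals that the pair $(\alpha,\beta)$ transforms contragrediently to $R_\theta$. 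Equivalently, under rotation the coefficients $\mu_{k,j}$ (and hence $\eta_{k,j}$) of $\mathcal{M}_n$ transform by exactly the representation of $SO(2)$ on $V_n$ that acts on the coefficients of a generic binary form $\sum_{k+j=n}\binom{n}{k}a_{k,j} x^k y^j$. Summing over $2 \leq n \leq d$ and using $W_d = V_2 \oplus \cdots \oplus V_d$ shows that $\Phi$ intertwines the two $SO(2)$-actions summand by summand.

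Differentiating the equivariance relation at $\theta = 0$ then gives $\Phi \circ D = D_\eta \circ \Phi$, where $D$ is the derivation of Example~\ref{so2} extended summand-wise to $W_d$ and $D_\eta$ is the infinitesimal generator of the $SO(2)$-action on $\mathbb{C}[\eta]_d$. Because $SO(2)$ is connected, a polynomial is $SO(2)$-invariant iff the corresponding Lie-algebra derivation annihilates it, so $\mathbb{C}[\eta]_d^{SO(2)} = \ker D_\eta$ and $\mathbb{C}[W_d]^{\mathfrak{so}_2} = \ker D$. Equivariance therefore restricts $\Phi$ to the desired isomorphism $\mathbb{C}[W_d]^{\mathfrak{so}_2} \to \mathbb{C}[\eta]_d^{SO(2)}$. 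The rational case follows at once by extending $\Phi$ to an isomorphism of fraction fields $\mathbb{C}(W_d) \to \mathbb{C}(\eta)_d$, which inherits equivariance, and passing to fixed fields.
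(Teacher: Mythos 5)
Your proposal is correct and follows essentially the same route as the paper: identify the $SO(2)$-action on the normalized moments with the action on the coefficients of binary forms, differentiate at $\theta=0$ to see that the two Lie-algebra derivations coincide, and conclude that the invariant rings (and their fraction fields) are defined by the same kernel condition. The only difference is cosmetic --- you derive the transformation law of the moments from the generating integral $\iint(\alpha x+\beta y)^n f\,dx\,dy$, whereas the paper quotes that formula from the literature before differentiating it.
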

\begin{proof}
The group  $SO(2)$ acts on a normalized moment $\eta_{k,j}$, see \cite{Te}, \cite{PR1}, in a such way
$$
\begin{pmatrix}  \cos\theta & -\sin\theta\\ \sin\theta &\cos\theta \end{pmatrix}^{-1} \eta_{k,j}=\sum_{k=0}^p \sum_{j=0}^q (-1)^{p-k}\binom{p}{k}\binom{q}{j} (\cos\theta)^{q+k-j} (\sin\theta)^{p-k+j}\eta_{k+j, p+q-k-j}.
$$
To get action of the Lie algebra  $\mathfrak{so}_2$ we differentiate it by $\theta$ and after simplification obtain:
\begin{gather*}
\frac{d}{d \theta} \begin{pmatrix}  \cos\theta & -\sin\theta\\ \sin\theta &\cos\theta \end{pmatrix}^{-1} \eta_{p,q} \Big|_{\theta=0}=q \eta_{p+1,q-1}-p \eta_{p-1,q+1}.
\end{gather*}
%see also \cite{}.

Thus, the algebra  $\mathfrak{so}_2$ acts on $W_d$ as the differential operator
$$
\sum_{p+q=2}^d( q \eta_{p+1,q-1}-p \eta_{p-1,q+1})\frac{\partial}{\partial \eta_{p,q}}.
$$
But the action is identical to one described in Example~2.2.

Thus, the normalized moment invariant and the  $\mathfrak{so}_2$-invariants of the sum of binary forms are defined by the same partial differential equation. It implies that  $\mathbb{C}(\eta)_d^{SO(2)} \cong \mathbb{C}(W_d)^{\mathfrak{so}_2}$  and $\mathbb{C}[\eta]_d^{SO(2)} \cong \mathbb{C}[W_d]^{\mathfrak{so}_2}$
\end{proof}

Let us recall, that 
 a \textit{finitely generated algebra}  is an associative algebra $A$ over a field $\mathbb{K}$  where there exists a finite set of elements $S$ of $A$ such that every element of $A$ can be expressed as  a polynomial in $S$ with coefficients in $\mathbb{K}$. The set $S$ is called a \textit{generating set} of the algebra  
 $A.$   A generating set of $A$   is \textit{minimal} if no proper subset of it generates $A$.
If $A$ is finitely generated graded algebra then any two its minimal generating sets 
 have the same number of homogeneous elements. 
If the algebra $A$ is a field extension  over $\mathbb{K}$ then its generators are algebraically  independed and the size of a minimal generating set is equal to the transcendence degree of the field extension  $A/\mathbb{K}$. 
%We will use it in Section~6.

%=============================================================

\section{Eigenvalues, eigenvectors of the derivation   ${D}$  and the Kravchuk polynomials. }

%=================================================================

Since the algebra   $\mathfrak{so}_2$ is commutative  then all its irreducible representations are one-dimensio\-nal. It implies that the derivation operator $D$ is diagonalizable on $V_d^*$ and to find the algebra   $\mathbb{C}[V_d]^{\mathfrak{so}_2}$ it is enough to find  its  eigenvalues and  eigenvectors.

As it be shown above, see Example~\ref{so2}, the action of $D$ on the basis elements  $ a_{d,0},\ldots, a_{0,d}$ of  $V_d^*$ is as follows:
$$
D(a_{p,q})=q a_{p+1,q-1}-p a_{p-1,q+1}.
$$
 
It is easy to see that     the matrix  $M_d$  of $D$ is the basis has the form 
$$
M_d=\left( \begin {array}{cccccc} 0&1&\ldots&0&0&0\\ \noalign{\medskip}-d&0&2&\ldots&0&0\\ 
\noalign{\medskip} \ldots&\ddots&\ldots&\ddots&\ldots&\ldots\\
\noalign{\medskip}0&-(d-1)&\ldots&\ldots&0&0\\ \noalign{\medskip}0&0&\ddots&0&d-1
&0\\ \noalign{\medskip}0&0&\ldots&-2&0&d\\ \noalign{\medskip}0&0&\ldots&0&-1&0
\end {array} \right).
$$
The following statement holds
\begin{te}\label{1} Let  $\Lambda_d$  be the spectrum of the matrix  $M_d$ and let  $\text{  {\rm  $\mathbi{e}_{d}({\lambda})$}}$ be  the  eigenvector associated with the eigenvalue  $\lambda  \in  \Lambda_d$. Then 
$$
\begin{array}{ll} 
(i) & \displaystyle \text{$M_d$ is diagonalizable  and }  \Lambda_d=\{ -d i,-(d-2)i,-(d-4)i,\ldots, (d-2)i,d i \},\\ %=\{ \pm \lambda i \mid \lambda \in \{d-2k \mid k=0,1,\ldots d \} \} , i^2=-1.,\\
(ii) &   \displaystyle \text{  {\rm  $\mathbi{e}_{d}({\lambda})$}}=\sum_{j=0}^d i^j \mathcal{K}_j\left(\frac{1}{2}(d-Im(\lambda)),d \right)a_{d-j,j},
\end{array}
$$
here $i^2=-1$, $Im(z)$ is the imaginary part of a complex number  $z$ and 
$$
\mathcal{K}_n(x,a)=\sum_{j=0}^n (-1)^j {x \choose j} {a-x \choose n-j},
$$
is  the binary Kravchuk polynomial.
\end{te}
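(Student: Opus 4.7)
The plan is to verify the eigenvalue equation by substituting the proposed ansatz $\mathbf{e}_d(\lambda) = \sum_{j=0}^d c_j\, a_{d-j,j}$ and reducing everything to the three-term recurrence satisfied by Kravchuk polynomials. Applying $D(a_{p,q}) = q\, a_{p+1,q-1} - p\, a_{p-1,q+1}$ term by term and reindexing, the eigenvalue equation $D(\mathbf{e}_d(\lambda)) = \lambda\, \mathbf{e}_d(\lambda)$ collapses to the scalar recurrence
$$(j+1)\, c_{j+1} = \lambda\, c_j + (d - j + 1)\, c_{j-1}, \qquad 0 \leq j \leq d - 1,$$
with $c_{-1} := 0$, together with the closure relation $c_{d-1} = -\lambda\, c_d$ coming from the last row of $M_d$ (equivalently, formally extending the recursion one step and demanding $c_{d+1} = 0$).

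The next step is to recognize this as the Kravchuk recurrence in disguise. Starting from the generating identity $(1+z)^{a-x}(1-z)^x = \sum_{n\geq 0} \mathcal{K}_n(x,a)\, z^n$, differentiating in $z$ and clearing the factor $1 - z^2$ produces
$$(n+1)\mathcal{K}_{n+1}(x,a) = (a - 2x)\mathcal{K}_n(x,a) - (a - n + 1)\mathcal{K}_{n-1}(x,a).$$
Setting $a = d$, $x = \tfrac{1}{2}(d - s)$ (so $a - 2x = s$), $\lambda = si$, and $c_j := i^j\, \mathcal{K}_j(x,d)$, the factors $i^2 = -1$ absorbed by the index shift convert this Kravchuk recurrence into the interior eigenvalue recurrence above. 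This confirms that the $\mathbf{e}_d(\lambda)$ given in the theorem satisfies every equation except possibly the closure relation.

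To pin down the spectrum, the closure requirement reduces to $\mathcal{K}_{d+1}((d - s)/2,\, d) = 0$. But directly from the generating function, whenever $x \in \{0, 1, \ldots, d\}$ the product $(1+z)^{d-x}(1-z)^x$ has degree exactly $d$ in $z$, so its coefficient of $z^{d+1}$ vanishes; since $\mathcal{K}_{d+1}(x,d)$ is a polynomial of degree $d+1$ in $x$, these $d+1$ integer points are all of its zeros. Translating back, the eigenvalues are exactly $\lambda = si$ with $s \in \{-d, -(d-2), \ldots, d-2, d\}$, which is $\Lambda_d$ as claimed; and having $d+1$ distinct eigenvalues in a $(d+1)\times(d+1)$ matrix immediately gives diagonalizability of $M_d$, so (i) and (ii) follow simultaneously.

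The main obstacle I expect is purely bookkeeping: keeping signs, powers of $i$, and index shifts in sync when matching the eigenvalue recurrence to the Kravchuk recurrence is finicky but short. A secondary point requiring some care is checking that the closure condition really is equivalent to $\mathcal{K}_{d+1} = 0$ (with the correct sign convention) and that no spurious eigenvalues arise outside the listed $d+1$ values -- but both points drop out of the degree/root count for $\mathcal{K}_{d+1}(\cdot, d)$ once one trusts the generating-function computation.
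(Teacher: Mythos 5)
Your proposal is correct, and it takes a genuinely different route from the paper. The paper splits the two claims: for (i) it conjugates $M_d$ by ${\rm diag}(1,i,\ldots,i^d)$ to obtain $-i\,{\rm S}_d$, where ${\rm S}_d$ is the tridiagonal Sylvester--Kac matrix, and then quotes the known result that $\sigma({\rm S}_d)=\{-d,-(d-2),\ldots,d\}$ with distinct eigenvalues; for (ii) it assembles the components $x_k$ of an eigenvector into a generating function $g(z)=\sum x_k z^k$, converts the eigenvector equations into the ODE $(1+z^2)g'=(\lambda+dz)g$, solves to get $g(z)=(1+iz)^{(d+Im(\lambda))/2}(1-iz)^{(d-Im(\lambda))/2}$, and matches this against the Kravchuk generating function. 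You instead stay at the level of the three-term recurrence $(j+1)c_{j+1}=\lambda c_j+(d-j+1)c_{j-1}$, identify it with the Kravchuk recurrence (itself obtained by differentiating the same generating function, so the core identification is really the ODE in coefficient form), and -- this is the genuinely new ingredient -- extract the spectrum from the closure condition $\mathcal{K}_{d+1}\bigl(\tfrac{1}{2}(d-s),d\bigr)=0$ via a degree-and-root count, getting diagonalizability from the $d+1$ distinct eigenvalues. What your approach buys is self-containedness and unification: you do not need the external citation for the Sylvester--Kac spectrum, and (i) and (ii) come out of a single computation, including an explicit argument (absent from the paper) that no eigenvalues outside $\Lambda_d$ can occur. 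What the paper's route buys is that the similarity to ${\rm S}_d$ makes (i) immediate given the classical reference, and the closed-form generating function $g(z)$ is reusable elsewhere (e.g.\ in the later identities for $\mathbi{e}_{2j}(0)$). Your bookkeeping checks out: with $a=d$, $a-2x=s$, $c_j=i^j\mathcal{K}_j(x,d)$, the factor $i^{-2}=-1$ turns $-(d-j+1)\mathcal{K}_{j-1}$ into $+(d-j+1)c_{j-1}$ as required, and $\mathcal{K}_{d+1}(x,d)$ has degree exactly $d+1$ in $x$ with leading coefficient $(-1)^{d+1}2^{d+1}/(d+1)!$, so the integer points $0,1,\ldots,d$ are indeed all of its roots.
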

\begin{proof}
$(i)$  Consider the  $(d+1) \times (d+1)$-matrix
$$
{\rm S}_d=\left( \begin {array}{cccccc} 0&1&\ldots&0&0&0\\ \noalign{\medskip}d&0&2&\ldots&0&0\\
\noalign{\medskip} \ldots&\ldots&\ldots&\ldots&\ldots&\ldots\\
 \noalign{\medskip}0&d-1&\ddots&\ddots&0&0\\ \noalign{\medskip}0&0&\ldots&0&d-1
&0\\ \noalign{\medskip}0&0&\ldots&2&0&d\\ \noalign{\medskip}0&0&\ldots&0&1&0
\end {array} \right).
$$

The matrix  ${\rm S}_d$  is    the tridiagonal  Sylvester matrix. It is  diagonalized   and has $d+1$ distinct  eigenvalues: 
$$
\sigma ({\rm S}_d)=\{ -d,-(d-2),-(d-4),\ldots, (d-2), d \},
$$
see  \cite{JS} and \cite{EK}. 
Let us consider now the diagonal matrix  ${\rm diag}(1,i,i^2,\ldots,i^d)$. By  direct calculation we get  
$$
{{\rm diag}(1,i,i^2,\ldots,i^d)}\,  M_d \, {{\rm diag}(1,i,i^2,\ldots,i^d)^{-1}}=-i {\rm S}_d.
$$
Thus
$$
\Lambda_d=-i\sigma ({\rm S}_d)=\{ -d i,-(d-2)i,-(d-4)i,\ldots, (d-2)i,d i \}.
$$
$(ii)$
Let  $\mathbi{e}_{n}({\lambda})=(x_0,x_1,\ldots,x_d)^{\top}$ be an eigenvector of  $M_d$  associated with the eigenvalue  $\lambda \in \Lambda_d$. Let us find a recurrence relation for the sequence   $\{x_k \}$. By the definition of an eigenvector the following identity holds:  $$ M_d \mathbi{e}_{n}({\lambda}) = \lambda \mathbi{e}_{n}({\lambda}).$$ 
We have
\begin{gather*}
\left(\begin {array}{cccccc} 0&1&\ldots&0&0&0\\ \noalign{\medskip}-d&0&2&\ldots&0&0\\ 
\noalign{\medskip} \ldots&\ddots&\ldots&\ddots&\ldots&\ldots\\
\noalign{\medskip}0&-(d-1)&\ldots&\ldots&0&0\\ \noalign{\medskip}0&0&\ddots&0&d-1
&0\\ \noalign{\medskip}0&0&\ldots&-2&0&d\\ \noalign{\medskip}0&0&\ldots&0&-1&0
\end {array} \right) \cdot \begin{pmatrix} x_0\\x_1\\x_2\\ \ldots \\  x_k \\ \ldots\\ x_{d-1}\\ x_d\end{pmatrix}  =
\begin{pmatrix} x_1 \\-dx_0+2x_2\\ -(d-1)x_1+3x_3\\ \ldots\\-(d+1-k)x_{k-1}+(k+1)x_{k+1}\\ \ldots \\ -3x_3-z^{d-1} \\-x_{d-1}\end{pmatrix}.% =\\\lambda (x_0,x_1,\ldots,x_d).
\end{gather*}
After multiplication we get the following system of linear equations 
$$
\begin{cases} \lambda x_0=x_1,\\
\lambda x_1=-dx_0+2x_2,\\
\ldots \\
\lambda x_{k} =-(d+1-k)x_{k-1}+(k+1)x_{k+1},\\
\ldots \\
\lambda x_d=-x_{d-1}.
\end{cases}
$$
Multiplying the members of the equations by 
 $z^k, k=0,1,\ldots,d$ and adding, we obtain 
\begin{gather*}
\sum_{k=0}^d \lambda x_k z^k=\lambda \sum_{k=0}^d  x_k z^k=\lambda g(z),
\end{gather*}
where
$$
g(z)=x_0 +x_1 z+\cdots+x_d z^d.
$$
After simplification, we get 
\begin{gather*}
x_1+(-dx_0+2x_2)z+(-(d-1)x_1+3x_3)z^3+\cdots+(-3x_3-z^{d-1})-x_d z^d=\\=(1+z^2)g'(z)-d \cdot z  g(z).
\end{gather*}
Finally, we obtain the linear differential equation 
 $$
(1+z^2)g'(z)-dz g(z)=\lambda g(z),
$$
or  
$$
g'(z)=\frac{\lambda+dz}{1+z^2}g(z).
$$
We solve the differential equation subject to the initial condition  $g(0)=1$ and get the explicit form of  the generating function for the sequence $\{ x_j \}$:
$$
g(z)=(1+z^2)^{\frac{d}{2}}e^{\lambda \arctan(z)}.
$$
Since
\begin{gather*}
e^{i  \arctan(z)}=\cos(\arctan(z))+i \sin( \arctan(z))= \frac{1+iz}{\sqrt{1+z^2}},
\end{gather*}
then 
\begin{gather*}
g(z)=(1+z^2)^{\frac{d}{2}}e^{\lambda \arctan(z)}=(1+z^2)^{\frac{d}{2}}e^{i Im(\lambda) \arctan(z)}=(1+z^2)^{\frac{d}{2}} \left(\frac{1+iz}{\sqrt{1+z^2}}\right)^{Im(\lambda)}=\\=(1+z^2)^{\frac{d-Im(\lambda)}{2}}(1+iz)^{Im(\lambda)}=(1+iz)^{\frac{d+Im(\lambda)}{2}}(1-iz)^{\frac{d-Im(\lambda)}{2}}.
\end{gather*} 
Let us recall that the binary Kravchuk polynomials  $\mathcal{K}_n(x,a)$  are defined by 
$$
\mathcal{K}_n(x,a):=\sum_{j=0}^n (-1)^j {x \choose j} {a-x \choose n-j},
$$
with, see \cite{Is}, the ordinary generating function: 
$$
\sum_{j=0}^{\infty} \mathcal{K}_j(x,a) z^k=\left( 1+z \right) ^{a-x} \left( 1-z \right) ^{x}.
$$
Therefore
\begin{gather*}
g(z)=(1+iz)^{\frac{d+Im(\lambda)}{2}}(1-iz)^{\frac{d-Im(\lambda)}{2}}=(1+iz)^{d-\frac{d-Im(\lambda)}{2}}(1-iz)^{\frac{d-Im(\lambda)}{2}}=\\=
\sum_{j=0}^{\infty} \mathcal{K}_j\left(\frac{d-Im(\lambda)}{2},d \right) (iz)^j.
\end{gather*}

Thus, we have the following expression for the eigenvectors of the derivation $D:$ 
$$
\mathbi{e}_{d}({\lambda})=\sum_{j=0}^d i^j \mathcal{K}_j\left({ \frac{1}{2}(d-Im(\lambda))},d \right)a_{d-j,j},
$$

or in  the notation   $\lambda=\pm i s$:
\begin{align*}
&\mathbi{e}_{d}({si})=\sum_{j=0}^d i^j \mathcal{K}_j\left(\frac{1}{2}(d-s),d \right)a_{d-j,j},\\
&\mathbi{e}_{d}({-si})=\sum_{j=0}^d  i^j \mathcal{K}_j\left(\frac{1}{2}(d+s),d \right) a_{d-j,j}.
\end{align*}
\end{proof}
Note  that $\overline{\mathbi{e}_{n}({\lambda})}=\mathbi{e}_{n}({-\lambda})$, here  $\overline{\phantom{a^b}}$ denotes the complex conjugate.

\begin{pr}{\rm For  $W^*_3=V_2^* \oplus V_3^*$ we have   7 eigenvectors --  3 eigenvectors in  the vector space   $V_2^*=\langle a_{2,0},a_{1,1},a_{0,2} \rangle:$
\begin{align*} 
&\mathbi{e}_2(2i)=a_{{2,0}}+2\,ia_{{1,1}}-a_{{0,2}},\\
&\mathbi{e}_2(0)=a_{{2,0}}+a_{{0,2}},\\
&\mathbi{e}_2(-2i)=e_2(2i)^*=a_{{2,0}}-2\,ia_{{1,1}}-a_{{0,2}},
\end{align*}
and  4 eigenvectors in  the vector space  $V_3^*=\langle a_{3,0},a_{2,1},a_{1,2},a_{0,3} \rangle:$
\begin{align*}
&\mathbi{e}_3(3i)=a_{{3,0}}+3\,ia_{{2,1}}-3\,a_{{1,2}}-ia_{{0,3}},\\
&\mathbi{e}_3(i)=a_{{3,0}}+ia_{{2,1}}+a_{{1,2}}+ia_{{0,3}},\\
&\mathbi{e}_3(-i)=a_{{3,0}}-ia_{{2,1}}+a_{{1,2}}-ia_{{0,3}},\\
&\mathbi{e}_3(-3i)=a_{{3,0}}-3\,ia_{{2,1}}-3\,a_{{1,2}}+ia_{{0,3}}.
\end{align*}
}
\end{pr}

%========================================================

In  \cite{MP}, see also \cite{F2000}, the complex moments $c_{p,q}$ were introduced:
$$
c_{p,q}=\sum_{k=0}^p \sum_{j=0}^q \binom{p}{k} \binom{q}{j} (-1)^{q-j} \, i^{p+q-k-j} a_{k+j,p+q-k-j}.
$$
It is easy to see that  $c_{p,q}=\mathbi{e}_{p+q}((p-q) i).$
Thus the complex moments are  eigenvectors of  $D$ and can be expressed explicitly in terms of  Kravchuk polynomials:
\begin{gather*}
c_{p,q}=\sum_{j=0}^{p+q} i^j \mathcal{K}_j\left(q,p+q \right)a_{p+q-j,j}.
\end{gather*}

%Теорема про тороїдальні інваріанти

%===========================================================================

\section{Polynomial  rotation invariants.}

%======================================================================================

The problem of deriving  polynomial rotation invariants leads to  some Diophantine equation. 
Let us discuss the general framework and introduce the necessary terminology, see \cite{RS} for more details.

Let  $L$ be a diagonalizable derivation acting   on the polynomial algebra $ \mathbb{C}[e_1, e_2 \ldots, e_n]$ and let  $e_1, e_2 \ldots, e_n$ be its eigenvectors associated with the   eigenvalues $\lambda_1,\lambda_2,$ $ \ldots, \lambda_n$ . Then $L$ acts on a monomial in a such way 
\begin{gather*}
L \left(e_1^{k_1} e_2^{k_2} \cdots e_n^{k_n} \right)= L(e_1^{k_1}) e_2^{k_2} \cdots e_n^{k_n}+ e_1^{k_1} L(e_2^{k_2}) \cdots e_n^{k_n}+\cdots+e_1^{k_1} e_2^{k_2} \cdots L(e_n^{k_n})=\\= (k_1 \lambda_1+k_2 \lambda_2 \cdots+k_n \lambda_n) e_1^{k_1} e_2^{k_2} \cdots e_n^{k_n}, k_i \in \mathbb{N}
\end{gather*}
Thus each  non-negative integer solution of the equation 
 $k_1 \lambda_1+k_2 \lambda_2 \cdots+k_n \lambda_n=0$  defines the  kernel element  $e_1^{k_1} e_2^{k_2} \cdots e_n^{k_n}$ and we obtain:
$$
\ker L=\{e_1^{k_1} e_2^{k_2} \cdots e_n^{k_n} \mid k_1 \lambda_1+k_2 \lambda_2 \cdots+k_n \lambda_n=0   \}.
$$

Let us recall the notion of the \textit{Hilbert bases} of a monomial algebra.
Let 
$\mathbb{C}[x_1,x_2,\ldots,x_n]$ be the polynomial algebra over field $\mathbb{C}.$ A subalgebra   $\mathcal{M}\subset \mathbb{C}[x_1,x_2,\ldots,x_n] $ is called  a \textit{monomial algebra} if $\mathcal{M}$ can be generated  by monomials  $x_1^{k_1}x_2^{k_2}\cdots x_n^{k_n}$.  Assign to each monomial  $\mathbi{m}=x_1^{k_1}x_2^{k_2}\cdots x_n^{k_n}$ the  point  $p(\mathbi{m})=(k_1,k_2,\ldots, k_n)  $ in the lattice  $\mathbb{N}^n$. Put  $P(\mathcal{M})=\{ p(\mathbi{m}) \mid \mathbi{m} \in \mathcal{M} \}.$ The set  $P(\mathcal{M})$ forms a monoid.  By Gordan's lemma this monoid is finitely generated, see \cite{CL}. 
 The minimal spanning subset of $P(\mathcal{M})$ is called the \textit{Hilbert basis  } and denoted by  $H(\mathcal{M})$. Put  $G(\mathcal{M})=\{ \mathbi{m} \in \mathcal{M} \mid p(\mathbi{m}) \in H(\mathcal{M}) \}.$   Then the set  $G(\mathcal{M})$ is the minimal generating set of the monomial algebra  $\mathcal{M}.$ 
 
%For more details, see   \cite{RS}. 

\begin{pr}{\rm  Let  a derivation   $L$ acts on the vector space $V=\langle e_1,e_2,e_3 \rangle:$ 
$$  
L(e_1)=e_1, L(e_2)=e_2, L(e_3)=-2e_3.
$$
The kernel of  $L$ is a monomial algebra
$$
\ker L=\{e_1^{k_1} e_2^{k_2} e_n^{k_3} \mid k_1+k_2 - 2 k_3=0   \}.
$$
We have  $P(\ker L)=\{ (k_1,k_2,k_3) \in \mathbb{N}^3  \mid k_1+k_2 - 2 k_3=0 \}.$
The vector  $\Phi=(1,1,-2)$ is called the  \textit{generating vector } of the monoid  $P(\ker L)$.

 It is easy to check that  $P(\ker L)$ is generated by the three solutions:  $(1,1,1),$ $(2,0,1)$ and  $(0,2,1)$. Therefore,  $H(\ker L)=\{ (1,1,1),(2,0,1),(0,2,1) \}$. Thus, the minimal generating set of  $P(\ker L)$  consists of the elements
$$
G(\ker L)=\{e_1e_2e_3,  e_1^2e_3, e_2^2e_3   \}.
$$

 }
\end{pr}

We now return to the derivarion  $D$. Since $D$ is diagonalizable   then   $\mathbb{C}[W_d]^{\mathfrak{so}_2}$ is a monomial algebrа. 
The following statement holds

\begin{te} The minimal generating set $G(\mathbb{C}[W_d]^{\mathfrak{so}_2}]), d \geq 2 $ of the algebra of rotation polynomial invariants  is as follows 

$$
G(\mathbb{C}[W_d]^{\mathfrak{so}_2}])=\{ \text{{\rm $\mathbi{m}$}} \mid p(\text{\rm $\mathbi{m}$})  \in H(\mathcal{M}_d) \},
$$
where 
 $\mathcal{M}_d$ is a monoid with the generating vector 
$$
\Phi_d=(\underbrace{1,\ldots,1}_{\text{$l_{1}$ times}}, \underbrace{2,\ldots,2}_{\text{$l_{2}$ times}}\ldots,\underbrace{d-1,\ldots,d-1}_{\text{$l_{d-1}$ times}},d,\underbrace{-1,\ldots,-1}_{\text{$l_{1}$ times}},\ldots,\underbrace{-(d-1),\ldots,-(d-1)}_{\text{$l_{d-1}$ times}},-d),
$$
 and  $$
l_s=\begin{cases} \left[ \frac{d}{2} \right], s=0,\\ d-\left[ \frac{d}{2} \right]-1, s=1,\\\left[ \frac{d-s}{2}\right]+1, \text{ $s>1$} 
\end{cases}.
$$

\end{te}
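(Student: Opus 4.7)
The plan is to reduce the description of $\mathbb{C}[W_d]^{\mathfrak{so}_2}$ to a Hilbert basis computation for a monomial subalgebra, following the framework outlined at the beginning of Section~4. Applying Theorem~\ref{1} to each summand $V_k^*$ of $W_d^* = V_2^* \oplus \cdots \oplus V_d^*$, I conclude that $D$ is diagonalizable on $W_d^*$ with eigenbasis $\{\mathbi{e}_k(\lambda) : 2 \le k \le d,\ \lambda \in \Lambda_k\}$. Hence $\mathbb{C}[W_d]$ is the polynomial algebra in these eigenvectors. Because $D$ is a derivation, each monomial $\mathbi{m} = \prod \mathbi{e}_k(\lambda)^{n_{k,\lambda}}$ is itself a $D$-eigenvector with eigenvalue $\sum n_{k,\lambda}\lambda$, so $\mathbi{m} \in \ker D$ precisely when this sum vanishes. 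Therefore $\mathbb{C}[W_d]^{\mathfrak{so}_2}$ is a monomial subalgebra of $\mathbb{C}[W_d]$ and, by the principle recalled in Example~4.1, its minimal generating set corresponds bijectively to the Hilbert basis of the monoid of non-negative integer solutions of $\sum n_{k,\lambda}\lambda = 0$.

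Next I would identify that monoid with $\mathcal{M}_d$ by computing the multiplicity with which each eigenvalue appears in the basis of $W_d^*$. Since every $\lambda \in \Lambda_k$ has the form $\pm s i$ with $s \in \mathbb{N}$ of the same parity as $k$, dividing through by $i$ turns the invariance condition into an integer linear equation. A direct count shows that $\mathbi{e}_k(si)$ is a basis eigenvector for $si$ precisely when $s \le k \le d$ and $k \equiv s \pmod 2$, giving multiplicities $l_s = \lfloor(d-s)/2\rfloor + 1$ for $s \ge 2$, $l_1 = d - \lfloor d/2\rfloor - 1$ and $l_0 = \lfloor d/2\rfloor$, where the two exceptional cases correct for the absence of $V_0$ and $V_1$ from $W_d$; the multiplicities of $-si$ match those of $si$ by the conjugation symmetry $\overline{\mathbi{e}_k(si)} = \mathbi{e}_k(-si)$ noted after Theorem~\ref{1}. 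Listing these multiplicities with their eigenvalues in the prescribed order yields exactly $\Phi_d$.

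The most delicate point is the bookkeeping of the zero-eigenvalue vectors $\mathbi{e}_{2j}(0)$. Each of them is already $\mathfrak{so}_2$-invariant, so every non-negative power lies in the kernel; equivalently these $l_0$ coordinates are free in the monoid and contribute the $l_0$ standard unit vectors to the Hilbert basis, which in turn supply the individual generators $\mathbi{e}_{2j}(0)$ that must belong to $G(\mathbb{C}[W_d]^{\mathfrak{so}_2})$. Once this is kept in mind, the identity $G(\mathbb{C}[W_d]^{\mathfrak{so}_2}) = \{\mathbi{m} : p(\mathbi{m}) \in H(\mathcal{M}_d)\}$ follows immediately from the monomial algebra principle, and the remaining explicit computation of $H(\mathcal{M}_d)$ reduces to a finite Gordan-type enumeration, carried out in Section~4 for $d \le 5$ and producing the announced counts $14,\ 65,\ 562$.
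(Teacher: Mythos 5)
Your proposal is correct and follows essentially the same route as the paper: diagonalize $D$ on $W_d^*$ via Theorem~\ref{1}, observe that monomials in the eigenvectors are eigenvectors with eigenvalue $\sum k_n(\lambda)\lambda$, and identify the kernel with the monomial algebra whose exponent monoid has generating vector $\Phi_d$, so that the minimal generating set is read off from the Hilbert basis. Your explicit derivation of the multiplicities $l_s$ (counting $k$ with $s\le k\le d$, $k\equiv s \pmod 2$, with the corrections for the missing $V_0,V_1$) fills in what the paper dismisses as ``a simple combinatorial consideration,'' and is accurate.
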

\begin{proof} Theorem~\ref{1} implies that the  operator  $D$ is diagonalizable on the direct sum  
$$
W^*_d=V^*_2 \oplus V^*_3 \oplus \cdots \oplus V^*_d.  
$$ 
with the spectrum 
$$
\Lambda=\Lambda_2 \cup \Lambda_3  \cup \cdots \cup \Lambda_d,
$$
where 
$$
\Lambda_n=\{{n i},{(n-2) i},\ldots {-n i}\}.
$$
Denote  by  $l_s$ the number of eigenvectors of $\Lambda$ with the eigenvalue $s i$. A simple combinatorial consideration shows that
$$
l_s=\begin{cases} \left[ \frac{d}{2} \right], s=0,\\ d-\left[ \frac{d}{2} \right]-1, s=1,\\\left[ \frac{d-s}{2}\right]+1, \text{ $s>1$}.
\end{cases}
$$
It is clear that  $l_d=1$  and   $l_{-s}=l_s$.

Consider the monomial  
$$
\mathbi{m}=\prod_{n=2}^{d} \prod_{\lambda \in \Lambda_n} \mathbi{e}_n^{k_n(\lambda)}(\lambda) \in  \mathbb{C}[W_d].
$$

Then
\begin{gather*}
D(\mathbi{m})=D\left( \prod_{n=2}^{d} \prod_{\lambda \in \Lambda_n} \mathbi{e}_n^{k_n(\lambda)}(\lambda)\right)=
\prod_{n=2}^{d} \prod_{\lambda \in \Lambda_n} \mathbi{e}_n^{k_n(\lambda)}(\lambda) \cdot \left( \sum_{n=2}^{d} \sum_{\lambda \in \Lambda_n }k_n(\lambda) \cdot  \lambda \right).
\end{gather*}
Since the eigenvalue  $\lambda=is$ occurs in $\Lambda$ exactly  $l_s$ times,  then  
\begin{gather*}
\sum_{n=2}^{d} \sum_{\lambda \in \Lambda_n }k_n(\lambda) \cdot  \lambda=\underbrace{(k_2(i)+k_3(i)+\cdots )}_{l_1 \text{  addition} } \cdot i +\underbrace{(k_2(2i)+ k_4(2i)+\cdots )}_{l_2 \text{ addition} } 2i+\cdots+\\+k_d(di) di +\cdots+k_d(-di) (-di).
\end{gather*}
Therefore,  a generating vector of the monoid  $\mathcal{M}_d$ has the form 
$$
(\underbrace{1,\ldots,1}_{\text{$l_{1}$ times}}, \underbrace{2,\ldots,2}_{\text{$l_{2}$ times}}\ldots,\underbrace{d-1,\ldots,d-1}_{\text{$l_{d-1}$ times}},d,\underbrace{-1,\ldots,-1}_{\text{$l_{1}$ times}},\ldots,\underbrace{-(d-1),\ldots,-(d-1)}_{\text{$l_{d-1}$ times}},-d).
$$
This proves the theorem.
\end{proof}

Let us consider some examples of calculating of  minimal generating sets.
\begin{pr}{\rm For  $d=2$ we have   $\Lambda_2= \{ 2i,0,-2i \}$  and   $l_1=l_0=1$. 
The generating vector is equal 
to  $(1,0,-1)$ and the  Hilbert basis is  
$$
(0,1,0), (1,0,1).
$$

Therefore the algebra $\mathbb{C}[W_2]^{\mathfrak{so}_2}$  is  generated  by the two monomials
$$
\mathbi{e}_2(0),\mathbi{e}_2(2i) \mathbi{e}_2(-2i).
$$
We have
\begin{align*}
&\mathbi{e}_2(0)=a_{{2,0}}+a_{{0,2}},\\
&\mathbi{e}_2(2i) \mathbi{e}_2(-2i)=\left( a_{{2,0}}+2\,i a_{{1,1}}-a_{{0,2}} \right)  \left( a_{{2,0}}-2\,ia_{{1,1}}-a_{{0,2}} \right)=(a_{2,0}-a_{0,2})^2+4 a_{1,1}^2.
\end{align*}

The corresponding moment invariants are the well known Hu's invariants. 

}

\end{pr}

\begin{pr}{\rm For $d=3$ 
we have 
$$
l_0=1,l_1=1,l_2=1,l_3=1.
$$
The generating vector is 
 $\Psi_3=(3,2,1,0,-1,-2,-3)$. To find a Hilbert basis we use a computer algebra system  
 CoCoA, see \cite{CA}. 
After computation we obtain the   Hilbert  basis  $H(\mathcal{M}_3):$
\begin{align*}
&[0, 0, 0, 1, 0, 0, 0], [0, 1, 0, 0, 0, 1, 0], [1, 0, 0, 0, 0, 0, 1], [0, 0, 1, 0, 1, 0, 0], [1, 0, 0, 0, 1, 1, 0],\\ &[0, 1, 1, 0, 0, 0, 1], [0, 0, 2, 0, 0, 1, 0], [0, 1, 0, 0, 2, 0, 0], [1, 0, 1, 0, 0, 2, 0], [0, 2, 0, 0, 1, 0, 1],\\ &[1, 0, 0, 0, 3, 0, 0], [0, 0, 3, 0, 0, 0, 1], [2, 0, 0, 0, 0, 3, 0], [0, 3, 0, 0, 0, 0, 2].
 \end{align*}

Assign to each such  vector  an element of the minimal generating set.  For instance, $$
p([1, 0, 0, 0, 1, 1, 0])={\mathbi{e}_3(3i)\mathbi{e}_3(-i)\mathbi{e}_2(-2i)}.$$ 
Put  $$x_0=\mathbi{e}_2(0),x_1=\mathbi{e}_3(i),x_2=\mathbi{e}_2(2i),x_3=\mathbi{e}_3(3i),y_1=\mathbi{e}_3(-i),y_2=\mathbi{e}_2(-2i),y_3=\mathbi{e}_3(-3i).$$
Thus, a minimal generating set of the algebra of polynomial rotation invariants of order 3 consists of   14  elements: 
\begin{center}
\begin{tabular}{|l|l|l|}
\hline
Degree & Invariants & \#\\
\hline
1      &  $ x_0 $  &1      \\
\hline
2      &   $x_1 y_1, x_2 y_2,x_3 y_3$  &3      \\
\hline
3      &    $ x_{{3}}y_{{1}}y_{{2}}, x_{{1}}x_{{2}}y_{{3}},{x_{{1}}}^{2}y_{{2}},x_{{2}}{y_{{1}}}^{2\phantom{2^{b}}}$      & 4 \\
\hline
4      &    $ x_{{1}}x_{{3}}{y_{{2}}}^{2},{x_{{2}}}^{2}y_{{1}}y_{{3}},x_{{3}}{y_{{1}}}^{3\phantom{2^{b}}}, {x_{{1}}}^{3}y_{{3}} $    & 4  \\
\hline
5      &    ${x_{{3}}}^{2}{y_{{2}}}^{3},{x_{{2}}}^{3}{y_{{3}}}^{2\phantom{2^{b}}}$ & 2 \\
\hline
\multicolumn{2}{|l|}{Total} &14 \\
\hline      
\end{tabular}
\end{center}

For the Hu invariant we have
\begin{gather*}
h_5=\frac{1}{2}({x_{{1}}}^{3}y_{{3}}+x_{{3}}{y_{{1}}}^{3}),
h_6=\frac{1}{2} ({x_{{1}}}^{2}y_{{2}}+x_{{2}}{y_{{1}}}^{2}),
h_7=-\frac{i}{2} \left(x_{{3}}{y_{{1}}}^{3}-{x_{{1}}}^{3}y_{{3}} \right)
\end{gather*}
}
\end{pr}

\begin{pr}{\rm 
For the case  $d=4$ we have 
$$
l_{{0}}=2,l_{{1}}=1,l_{{2}}=2,l_{{3}}=1,l_{{4}}=1.
$$
and 
$$
\Phi_4=(1,2,2,3,4,0,0,-1,-2,-2,-3,-4).
$$
The Hilbert basis ( СоСоA) consists of 65 elements.
For convenience, we denote $x_{kj}=\mathbi{e}_k(j i),$ $ y_{k,j}=\mathbi{e}_k(-j  i)$  for  $j>0$.
A minimal generating set of the algebra of polynomial rotation invariants of order 4 shown   below 
\begin{center}
\begin{tabular}{|l|l|l|}
\hline
{\rm deg} & Invariants &\#\\
\hline
1      &  $ x_{20},x_{40} $ &2       \\
\hline
2      &   $x_{{33}}y_{{33}},x_{{44}}y_{{44}},x_{{42}}y_{{22}},x_{{42}}y_{{42}},x_{{22}}y_{{22}},x_{{22}}y_{{42}},x_{{31}}y_{{31}}$ &7       \\
\hline
3      &    $x_{{44}}y_{{31}}y_{{33}},x_{{42}}{y_{{31}}}^{2},x_{{33}}y_{{31}}y_{{42}},x_{{33}}y_{{31}}y_{{22}},x_{{3
1}}x_{{42}}y_{{33}}$  &  16   \\

 & $x_{{31}}x_{{22}}y_{{33}},x_{{31}}x_{{33}}y_{{44}},{x_{{42}}}^{2}y_{{44}},x_{{22}}x_{{42}}y_{{44}},x_{{44}}y_{{22}}y_{{42}}$ & \\

 & $x_{{44}}{y_{{42}}}^{2},{x_{{31}}}^{2}y_{{22}},{x_{{31}}}^{2}y_{{42}},{x_{{22}}}^{2}y_{{44}},x_{{22}}{y_{{31}}}^{2},x_
{{44}}{y_{{22}}}^{2\phantom{2^{b}}}$ & \\
\hline
4      &    ${x_{{31}}}^{2}x_{{42}}y_{{44}},x_{{31}}x_{{44}}y_{{42}}y_{{33}},{x_{{31}}}^{2}x_{{22}}y_{{44}},x_{{22}}x_{{33}}y_{{31}}y_{{44
}},x_{{31}}x_{{33}}{y_{{42}}}^{2}$ & 20    \\
      &    $x_{{31}}x_{{33}}{y_{{22}}}^{2},{x_{{22}}}^{2}y_{{31}}y_{{33}},{x_{{42}}}^{2}y_{{31}}y_{{33}},x_{{22}}x_{{42}}y_{{31}}y_{{33}},x_{{31}}x_{{33}}y_{{22}}y_{{4
2}}
$ &  \\
 & $x_{{22}}x_{{44}}{y_{{33}}}^{2\phantom{2^{b}}},x_{{42}}x_{{33}}y_{{31}}y_{{44}},x_{{31}}x_{{44}}y_{{22}}y_{{33}},x_{{44}}{y_{{31}}}^{2}y_{{22
}},x_{{44}}{y_{{31}}}^{2}y_{{42}}$ & \\
 & $
{x_{{33}}}^{2}y_{{42}}y_{{44}},{x_{{33}}}^{2}y_{{22}}y_{{44}},x_
{{42}}x_{{44}}{y_{{33}}}^{2},x_{{33}}{y_{{31}}}^{3},{x_{{31
}}}^{3}y_{{33}}$ & \\
\hline
5 & ${x_{{44}}}^{2}y_{{22}}{y_{{33}}}^{2},{x_{{44}}}^{2}y_{{42}}{y_{{33}}}^{2},{x_{{31}}}^{2}x_{{44}}{y_{{33}}}^{2},x_{{22}}{x_{{33}}
}^{2}{y_{{44}}}^{2},x_{{42}}{x_{{33}}}^{2}{y_{{44}}}^{2}$ & 16\\
 & ${x_{{22}}}^{3}{y_{{33}}}^{2},{x_{{33}}}^{2}{y_{{42}}}^{3},{x_{{33}}}^{2}y_{{22}}{y_{{42}}}^{2},{x_{{33}}}^{2}{y_{{22}}}^{2}y_{{42
}},{x_{{33}}}^{2}{y_{{22}}}^{3},{x_{{42}}}^{3}{y_{{33}}}^{2}$ & \\
 & $x_{{22}}{x_{{42}}}^{2}{y_{{33}}}^{2},{x_{{22}}}^{2}x_{{42}}{y_{{33}}}^{2},x_{{44}}{y_{{31}}}^{4},{x_{{31}}}^{4}y_{{44}},{x_{{33}
}}^{2}{y_{{31}}}^{2\phantom{2^{b}}}\! \! \!y_{{44}}$ & \\
\hline
6 & $x_{{31}}{x_{{44}}}^{2}{y_{{33}}}^{3},{x_{{33}}}^{3}y_{{31}}{y_{{44}}}^{2\phantom{2^{b}}}$ & 2 \\
\hline
7 & ${x_{{44}}}^{3}{y_{{33}}}^{4},{x_{{33}}}^{4}{y_{{44}}}^{3\phantom{2^{b}}}$ & 2 \\
\hline  
    \multicolumn{2}{|l|}{Total} &65 \\
\hline 
\end{tabular}
\end{center}
}
\end{pr}

%Інваріанти до степеня 3 знайдені в \cite{RS1}

Proceeding in the same manner as above,  we obtain that a minimal generating set of the algebra $\mathbb{C}[W_5]^{\mathfrak{so}_2}$  consists of  562 invariants up to degree 9.

%=============================================================================

\section{Analogue of  Cayley-Sylvester formula and Poincare series.}

%===========================================================================++

The number of linearly independent homogeneous $\mathfrak{sl}_2$-invariants of  binary form of  given degree    is  calculated  by the well-known  Cayley-Sylvester formula, see  \cite{Sylv}, \cite{Hilb}. In the section we find a  similar formula for  $\mathfrak{so}_2$-invariants.

The formal sum 
$$
{\rm Char}(W_d)=\sum_{k=-d}^d l_k q^k,
$$
is called the \textit{character} of  $\mathfrak{so_{2}}$  on $W_d$.    
Here  $q$ is a formal parameter. 

\begin{pr}{\rm For   $d=4$ we have  $l_0=2, l_1=1, l_2=2, l_3=1, l_4=1,$ and  
$$
{\rm Char}(W_4)=q^{-4}+q^{-3}+2 q^{-2}+q^{-1}+2+q+2q^2+q^3+q^4.
$$
}
\end{pr}

The following statement holds.

\begin{te} 
The number of linearly independent homogeneous  invariants of $\mathbb{C}[W_d]^{\mathfrak{so}_2}$ of degree  $n$   is equal to the number of   non-negative integer solutions of the system of equations: 
$$
\begin{cases} \displaystyle 
\sum_{k=1}^{2d}k(\alpha_{1}^{(k)}+\alpha_{2}^{(k)}+\cdots+\alpha_{l_k}^{(k)})=dn,\\
\displaystyle  \sum_{k=0}^{2d}(\alpha_{1}^{(k)}+\alpha_{2}^{(k)}+\cdots+\alpha_{l_k}^{(k)})=n.
\end{cases}
$$
\end{te}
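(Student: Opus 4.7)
My plan is to leverage the diagonalizability of $D$ established in the preceding section. Since the eigenvectors $\{\mathbi{e}_n(\lambda) : 2 \le n \le d,\ \lambda \in \Lambda_n\}$ form a basis of $W_d^*$, monomials in them form a homogeneous vector-space basis of $\mathbb{C}[W_d]$, indexed by non-negative integer tuples $(k_n(\lambda))$. By the Leibniz rule each such monomial $\prod_{n,\lambda} \mathbi{e}_n(\lambda)^{k_n(\lambda)}$ is itself an eigenvector of $D$ with eigenvalue $\sum_{n,\lambda} k_n(\lambda)\,\lambda$ and total degree $\sum_{n,\lambda} k_n(\lambda)$. Consequently $\ker D = \mathbb{C}[W_d]^{\mathfrak{so}_2}$ is spanned precisely by the weight-zero monomials, and the dimension of its degree-$n$ homogeneous component equals the number of such weight-zero monomials of total degree $n$.

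Next I would regroup exponents by eigenvalue rather than by the pair $(n,\lambda)$. Let $l_s$ denote the multiplicity of $si$ in the combined spectrum $\Lambda = \Lambda_2 \cup \Lambda_3 \cup \cdots \cup \Lambda_d$ (these are the values computed in Section~4, which satisfy $l_{-s} = l_s$), and let $\alpha_1^{(s)}, \ldots, \alpha_{l_s}^{(s)}$ be the exponents of the corresponding $l_s$ eigenvectors. The counting problem then becomes that of counting non-negative integer tuples $(\alpha_j^{(s)})$ satisfying
$$
\sum_{s=-d}^{d}\sum_{j=1}^{l_s}\alpha_j^{(s)} = n, \qquad \sum_{s=-d}^{d} s\sum_{j=1}^{l_s}\alpha_j^{(s)} = 0.
$$

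Finally, I would shift indices by $k = s + d$, so that $k$ ranges over $\{0, 1, \ldots, 2d\}$. The degree equation is unchanged, while the weight-zero equation transforms into
$$
\sum_{k=0}^{2d} k \sum_{j=1}^{l_{k-d}} \alpha_j^{(k)} \;=\; d \sum_{k=0}^{2d} \sum_{j=1}^{l_{k-d}} \alpha_j^{(k)} \;=\; dn.
$$
Since the $k=0$ contribution on the left vanishes, the sum may be restarted at $k=1$. With the convention that $l_k$ in the theorem statement refers to $l_{k-d}$ under this shift, we recover exactly the stated system.

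The argument is essentially a translation of the spectral decomposition of $D$ into an integer-point count; the only genuine care required is bookkeeping of the multiplicities $l_k$ under the index shift, and ensuring that the parity constraint inherent in each $\Lambda_n = \{-ni, -(n-2)i, \ldots, ni\}$ is respected consistently when taking the union across the summands $V_2, V_3, \ldots, V_d$ of $W_d$.
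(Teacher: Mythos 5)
Your proof is correct and follows essentially the same route as the paper: both arguments diagonalize $D$, observe that monomials in the eigenvectors $\mathbi{e}_n(\lambda)$ form a basis of each graded piece of $\mathbb{C}[W_d]$ and are weight vectors whose weight is the exponent-weighted sum of eigenvalues, and then count the weight-zero monomials of degree $n$, with the shift $k=s+d$ yielding the stated Diophantine system. The paper merely packages this count as the coefficient of $q^{0}$ in the character $\mathrm{Char}(S^n(W_d))=H_n(q^{-d},\ldots,q^{d})$, whereas you extract it by direct enumeration of the monomial basis; the combinatorial content, including the bookkeeping of the multiplicities $l_s$ under the index shift, is identical.
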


\begin{proof} Let $S(W_d)$  be the symmetrical algebra of the vector space $W_d^*.$ 
Then 
$$
S(W_d)=S^0(W_d)+S^1(W_d)+\cdots +S^n(W_d)+\cdots,
$$
where  $S^n(W_d)$  is the vector space of homogeneous polynomials of degree  $n.$

We have that 
$$
{\rm Char}(S^n(W_d)) = \sum_{k=-d n}^{dn}\gamma_d(n,k)q^k,  
$$
where $\gamma_d(n,k)$ is the dimension of the vector space generated by all eigenvectors with the eigenvalue $k i$. 

Since all $\mathfrak{so_{2}}$-invariants have zero eigenvalue then    $\dim S^n(\mathbb{C}[W_d]^{\mathfrak{so}_2})=\gamma_d(n,0), $ where $$S^n(\mathbb{C}[W_d]^{\mathfrak{so}_2})=S^n(W_d) \cap \mathbb{C}[W_d]^{\mathfrak{so}_2}.$$
From another side, the character   $ {\rm Char}(S^n(W_d))$   equals   $H_d(q^{-d},q^{-d+2},\ldots,q^{d}),$ see \cite{FH},  where   $H_d(x_0,x_1,\ldots,x_m)$ is the complete symmetrical polynomial   $$H_n(x_0,x_1,\ldots,x_m)=\sum_{|\alpha|=n} x_0^{\alpha_0} x_1^{\alpha_1} \ldots x_m^{\alpha_m}, |\alpha|=\sum_{j=0}^m \alpha_j.$$

Then we have  
\begin{gather*}
{\rm Char}(S^n(W_d))= H_n\bigl(\underbrace{q^{-d}, \ldots,q^{-d} }_{\text{$l_d$ times}},\ldots,\underbrace{q^{0}, \ldots,q^{0} }_{\text{$l_0$ times}}, \ldots, \underbrace{q^{d}, \ldots,q^{d} }_{\text{$l_d$ times}}\bigr)=\\
=\sum_{|\alpha|=n}\prod_{k=0}^{2d}q^{(d-k)(\alpha_{1}^{(k)}+\alpha_{2}^{(k)}+\cdots+\alpha_{l_{d-k}}^{(k)})}=\sum_{|\alpha|=n}q^{\sum\limits_{k=0}^{2d}(d-k)(\alpha_{1}^{(k)}+\alpha_{2}^{(k)}+\cdots+\alpha_{l_{d-k}}^{(k)})}=\\=
\sum_{|\alpha|=n}q^{d\sum\limits_{k=0}^{2d}(\alpha_{1}^{(k)}+\alpha_{2}^{(k)}+\cdots+\alpha_{l_{d-k}}^{(k)})-\sum\limits_{k=0}^{2d}k(\alpha_{1}^{(k)}+\alpha_{2}^{(k)}+\cdots+\alpha_{l_{d-k}}^{(k)})}=\sum_{|\alpha|=n}q^{dn-\sum\limits_{k=0}^{2d}k(\alpha_{1}^{(k)}+\alpha_{2}^{(k)}+\cdots+\alpha_{l_{d-k}}^{(k)})}=
\\=\sum_{k=0}^{2d\,n} \omega_d(n,s) q^{d\,n-s},
\end{gather*}
here  $\omega_d(n,s) $  is the  number of   non-negative integer solutions of the system of equations:
$$
\begin{cases} \displaystyle 
\sum_{k=1}^{2d}k(\alpha_{1}^{(k)}+\alpha_{2}^{(k)}+\cdots+\alpha_{l_{d-k}}^{(k)})=s,\\
\displaystyle  \sum_{k=0}^{2d}(\alpha_{1}^{(k)}+\alpha_{2}^{(k)}+\cdots+\alpha_{l_{d-k}}^{(k)})=n.
\end{cases}
$$ 
 Particularly, the coefficient of  $q^0$ is equal  $ \omega_d(n,d\,n).$
Thus  $\dim S^n(\mathbb{C}[W_d]^{\mathfrak{so}_2})= \omega_d(n,d\,n)$  as required.
\end{proof}

Let us derive a more convenient formula for calculation   $\dim S^n(\mathbb{C}[W_d]^{\mathfrak{so}_2})$.

The ordinary generating function  
$$
\mathcal{P}_d(z)=\sum_{n=0}^\infty \dim S^n(\mathbb{C}[W_d]^{\mathfrak{so}_2})  z^n,
$$ 
of the sequence  $\{ S^n(\mathbb{C}[W_d]^{\mathfrak{so}_2}) \}$  is called  the \textit{Poincar\'e series} of the algebra $\mathbb{C}[W_d]^{\mathfrak{so}_2}.$
The Poincar\'e series can be expressed as a  contour integral.
\begin{te}$$
\displaystyle \mathcal{P}_{d}(z){=}\frac{1}{2\pi i} \oint_{|t|=1} \frac{1}{\prod\limits_{i=0}^{2d}(1-zt^{i-d})^{l_{i}} } \frac{dt}{t}.
$$
\end{te}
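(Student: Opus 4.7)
The plan is to identify the Poincar\'e series with the $t^0$-coefficient of the bigraded character of $S(W_d)$ and then encode that coefficient extraction as a contour integral.

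First I would recall that since $D$ is diagonalizable on $W_d^*$ with eigenvalues $ki$ of multiplicity $l_k$ (for $k=-d,\dots,d$), the derivation $D$ is also diagonalizable on each symmetric power $S^n(W_d)$, because products of eigenvectors are eigenvectors with added eigenvalues. Consequently $S^n(W_d)$ decomposes into weight spaces indexed by the eigenvalue $ki$, and the key identity is
\begin{equation*}
\sum_{n=0}^{\infty} \mathrm{Char}(S^n(W_d))\, z^n \;=\; \prod_{k=-d}^{d} \frac{1}{(1-z t^{k})^{l_k}},
\end{equation*}
which is just the standard generating-function expansion for the symmetric algebra on a graded vector space (each eigenvector of weight $ki$ contributes a geometric factor $1/(1-zt^{k})$). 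Re-indexing $i = k+d$ matches the exponent pattern $t^{i-d}$ appearing in the claimed formula.

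Next I would observe that every invariant in $\mathbb{C}[W_d]^{\mathfrak{so}_2}$ lies in the zero weight space, so $\dim S^n(\mathbb{C}[W_d]^{\mathfrak{so}_2})$ is exactly the coefficient of $t^0$ in $\mathrm{Char}(S^n(W_d))$ — this was essentially established in the preceding theorem. Extracting a constant term of a Laurent polynomial is equivalent to a residue computation: for any Laurent polynomial $f(t)$, one has
\begin{equation*}
[t^0]\,f(t) \;=\; \frac{1}{2\pi i}\oint_{|t|=1} f(t)\,\frac{dt}{t}.
\end{equation*}
Applying this term by term to $\mathrm{Char}(S^n(W_d))$ gives
\begin{equation*}
\dim S^n(\mathbb{C}[W_d]^{\mathfrak{so}_2}) \;=\; \frac{1}{2\pi i}\oint_{|t|=1} \mathrm{Char}(S^n(W_d))\,\frac{dt}{t}.
\end{equation*}

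Finally I would multiply by $z^n$, sum over $n\geq 0$, and swap sum and integral. For $|z|$ sufficiently small, the sum $\sum_n \mathrm{Char}(S^n(W_d))z^n$ converges uniformly on the compact contour $|t|=1$ (since $|1-zt^{i-d}|$ is bounded away from zero there), which legitimizes the interchange and identifies the sum with the infinite product above. This yields
\begin{equation*}
\mathcal{P}_d(z) \;=\; \frac{1}{2\pi i}\oint_{|t|=1} \frac{1}{\prod_{i=0}^{2d}(1-zt^{i-d})^{l_i}}\,\frac{dt}{t},
\end{equation*}
and then extends by analytic continuation in $z$. The only delicate point is justifying the term-by-term integration, but this is routine given uniform convergence on the compact contour for $z$ in a sufficiently small disk; no heavy machinery is required.
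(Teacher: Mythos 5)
Your proposal is correct and follows essentially the same route as the paper: both identify $\dim S^n(\mathbb{C}[W_d]^{\mathfrak{so}_2})$ with the zero-weight coefficient of the character of $S^n(W_d)$, package these characters into the product generating function $\prod_k (1-zt^{k})^{-l_k}$, and extract the constant term in $t$ by Cauchy's integral formula. The only difference is organizational --- the paper phrases the intermediate step through the Diophantine count $\omega_d(n,dn)$ and a substitution $t\mapsto tz^{-d}$ in a two-variable series, whereas you work directly with the weight-space decomposition; the content is the same.
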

\begin{proof}
It is well known   that  the  number of   non-negative integer solutions of the system of equations
$$
\begin{cases} \displaystyle 
\sum_{k=1}^{2d}k(\alpha_{1}^{(k)}+\alpha_{2}^{(k)}+\cdots+\alpha_{l_k}^{(k)})=dn,\\
\displaystyle  \sum_{k=0}^{2d}(\alpha_{1}^{(k)}+\alpha_{2}^{(k)}+\cdots+\alpha_{l_k}^{(k)})=n,
\end{cases}
$$
equal to the coefficient of  $\displaystyle t^n z^{dn} $ of the expansion of the series 
$$
f_{d}(t,z)=\frac{1}{(1-t )^{l_d}(1-t\,z)^{l_{d-1}}(1-tz^2)^{ l_{d-2}}(1-tz^3)^{l_{d-3}}\ldots (1-t\,z^{2d})^{ l_d}}.
$$
Denote it in a such way
$$
\dim S^n(\mathbb{C}[W_d]^{\mathfrak{so}_2})=\left[ (t z^d)^n\right]f_{d}(t,z).
$$
Then by using the  Cauchy integral formula  we get
\begin{gather*}
\mathcal{P}_d(z)=\sum_{n=0}^\infty \dim S^n(\mathbb{C}[W_d]^{\mathfrak{so}_2})  z^n=\sum_{n=0}^\infty \left(\left[ (t z^d)^n\right]f_{d}(t,z) \right)  z^n=\sum_{n=0}^\infty \left(\left[ t^n\right]f_{d}(t z^{-d},z) \right) =\\=
 \sum_{n=0}^\infty \left(\left[ t^n\right]\frac{1}{2\pi i} \oint_{|t|=1} f_{d}(t z^{-d},z) \frac{dt}{t}\right)  z^n=\frac{1}{2\pi i} \oint_{|t|=1} f_{d}(t z^{-d},z) \frac{dt}{t}.
\end{gather*}
\end{proof}

\begin{pr}{\rm For  $d=3$ we have   $l_0=l_1=l_2=l_3=1$  and

$$
f_3(t,z)={\frac {1}{ \left( 1-t \right)  \left( 1-tz \right)  \left( 1-t{z}^{2
}\right)  \left( 1-t{z}^{3} \right)  \left( 1-t{z}^{4} \right) 
 \left( 1-t{z}^{5} \right)  \left( 1-t{z}^{6}\right) }}.
$$

After computing   the corresponding integral we obtain 
$$
\mathcal{P}_d(z)={\frac {{z}^{10}+{z}^{8}+3\,{z}^{7}+4\,{z}^{6}+4\,{z}^{5}+4\,{z}^{4}+3
\,{z}^{3}+{z}^{2}+1}{ \left( {z}^{3}-1 \right)  \left( {z}^{5}-1
 \right)  \left( {z}^{4}-1 \right)  \left( {z}^{2}-1 \right) ^{2}
 \left( z-1 \right) }}.
$$
Expand it in a series  we have 
$$
\mathcal{P}_3(z)=1+z+4\,{z}^{2}+8\,{z}^{3}+18\,{z}^{4}+32\,{z}^{5}+58\,{z}^{6}+94\,{z}
^{7}+151\,{z}^{8}+227\,{z}^{9}+\cdots
$$
Therefore, the algebra  $\mathbb{C}[W_3]^{\mathfrak{so}_2}$ consists one invariant of degree 1, namely $x_0$ in the notation of   Example~4.2. Also, there exists 4 linearly independent invariants of degree  2, namely  $x_1y_1,x_2y_2,x_3y_3, x_0^2,$  4 linearly independent invariants of degree  3 and so on.
}
\end{pr}

In the same way we obtain

\begin{gather*}
\mathcal{P}_4(z)=\frac {p_4(z)}{
 \left( 1-{z}^{3} \right) ^{3} \left( 1-{z}^{5} \right) ^{2} \left( 1-{z}^{7} \right)  \left( 1-{z}^{4} \right)  \left( 1-{z}^{2}
 \right) ^{2} \left( 1-z \right) ^{2}
}=\\=
1+2\,z+10\,{z}^{2}+34\,{z}^{3}+105\,{z}^{4}+288\,{z}^{5}+720\,{z}^{6}+1660\,{z}^{7}+3588\,{z}^{8}+7326\,{z}^{9}+\cdots, 
\end{gather*}
where
\begin{gather*}
p_4(z)={z}^{24}+5\,{z}^{22}+13\,{z}^{21}+33\,{z}^{20}+63\,{z}^{19}+112\,{z}^{
18}+174\,{z}^{17}+252\,{z}^{16}+331\,{z}^{15}+400\,{z}^{14}+\\+445\,{z}^{
13}+464\,{z}^{12}+445\,{z}^{11}+400\,{z}^{10}+331\,{z}^{9}+252\,{z}^{8
}+174\,{z}^{7}+112\,{z}^{6}+63\,{z}^{5}+33\,{z}^{4}+\\+13\,{z}^{3}+5\,{z}
^{2}+1.
\end{gather*}

\section{The algebra of rational invariants  $\mathbb{C}(W_d)^{\mathfrak{so}_2}.$}

%======================================================================

 The  rational invariants are more interested in  applications then polynomial invariants.

	In the following theorem we find the size of a minimal generating  set of the algebra of rational rotation invariants. 
\begin{te} The number of element in a minimal generating set of the algebra   $\mathbb{C}(W_d)^{\mathfrak{so}_2}$  equals 
$$
\frac{\left( d+4 \right)  \left( d-1 \right)}{2}-1.
$$
\end{te}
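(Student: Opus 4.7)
The plan is to reduce the claim to a transcendence-degree computation. By the general principle recalled at the end of Section~2, for a field extension $\mathbb{C}(W_d)^{\mathfrak{so}_2}/\mathbb{C}$ the cardinality of any minimal generating set equals the transcendence degree of the extension. Hence it is enough to prove that
$$
\mathrm{tr.deg}_{\mathbb{C}}\,\mathbb{C}(W_d)^{\mathfrak{so}_2} \;=\; \frac{(d+4)(d-1)}{2}-1.
$$

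For the transcendence degree itself I would invoke the classical orbit-dimension formula: for a connected algebraic group $G$ acting rationally on an irreducible variety $V$,
$$
\mathrm{tr.deg}_{\mathbb{C}}\,\mathbb{C}(V)^{G} \;=\; \dim V \;-\; \max_{v\in V}\dim(G\cdot v).
$$
Applied to $G=SO(2)$ acting on $W_d$ (whose rational invariant field coincides with $\mathbb{C}(W_d)^{\mathfrak{so}_2}$ since $SO(2)$ is connected), this gives $\dim W_d-r$, where $r$ is the generic orbit dimension. Because $\dim SO(2)=1$ we have $r\le 1$, so it is enough to exhibit a single point with trivial stabilizer. This is immediate from Theorem~1 applied to the summand $V_2$: the eigenvalue $0$ of $D$ on $V_2^*$ has multiplicity one, so the fixed line of $SO(2)$ in $V_2$ is $\mathbb{C}\cdot \mathbi{e}_2(0)$, and any vector whose $V_2$-component lies outside this line has trivial stabilizer in $W_d$. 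Consequently $r=1$ and $\mathrm{tr.deg}_{\mathbb{C}}\,\mathbb{C}(W_d)^{\mathfrak{so}_2}=\dim W_d-1$.

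It remains to perform the arithmetic. Since $\dim V_k=k+1$,
$$
\dim W_d \;=\; \sum_{k=2}^{d}(k+1) \;=\; \frac{(d+1)(d+2)}{2} - 3,
$$
so
$$
\dim W_d - 1 \;=\; \frac{d^{2}+3d-6}{2} \;=\; \frac{(d+4)(d-1)}{2} - 1,
$$
which is exactly the desired count. The only non-routine ingredient is the generic-stabilizer verification, but Theorem~1's explicit decomposition of $V_2^*$ into the eigenvalues $\{2i,0,-2i\}$ renders it essentially trivial; the bulk of the argument is the reduction to transcendence degree and the elementary dimension count.
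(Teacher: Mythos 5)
Your proposal is correct and follows essentially the same route as the paper: both reduce the count to $\mathrm{tr.deg}_{\mathbb{C}}\,\mathbb{C}(W_d)^{\mathfrak{so}_2}=\dim W_d-\dim SO(2)=\dim W_d-1$ and then evaluate $\dim W_d=\sum_{k=2}^{d}(k+1)=(d+4)(d-1)/2$, the only difference being that the paper just cites Springer for the first equality while you supply the generic-orbit justification. One small inaccuracy there: the generic stabilizer is not trivial (rotation by $\pi$ acts as the identity on every even-degree summand, so every stabilizer contains an element of order two), but it is finite, which is all the orbit-dimension formula requires, so your conclusion stands.
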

\begin{proof}

Since the group  $SO(2)$ as an affine variety is one-dimensional, then, see \cite{SP}, the transcendence degree of  the field  extension  $\mathbb{C}(W_d)^{\mathfrak{so}_2}/\mathbb{C}$ equals 
$$
{\rm tr\,deg}_{\mathbb{C}} \mathbb{C}(W_d)^{\mathfrak{so}_2}= \dim W_d- \dim SO(2).
$$
Thus, the algebra  $\mathbb{C}(W_d)^{\mathfrak{so}_2}$ consists of exactly   $\dim W_d- 1$ algebraically independent elements. Taking into account that 
$$
\dim W_d=\sum_{k=2}^d \dim V_k=\sum_{k=2}^d (k+1)=\frac{\left( d+4 \right)  \left( d-1 \right)}{2},
$$
which is what had to be proved.
\end{proof}

An explicit form of the rational invariants can be found as solutions of some partial differential equation.
  In the following theorem we present  a minimal generating set of the algebra $\mathbb{C}(W_d)^{\mathfrak{so}_2}.$ 
\begin{te} Let  $ q i \in \Lambda_p, q >0 $ be a fixed eigenvalue of the derivative $D$. The the set of  $\dim W_d -1$  invariants 
$$
G^{(d)}_{p,q}=\left\{\textbf{e}_{2j}(0), \textbf{e}_{n}(si)\textbf{e}_{n}(-si), \textbf{e}_n(si)^q{e_p(-q i)^{s}} \mid 2 \leq n \leq d, j \leq l_0, s i \in \Lambda_n,  s >0, q \neq s \right\},
$$
forms a transcendence basis of the algebra invariants $\mathbb{C}(W_d)^{\mathfrak{so}_2}$.
\end{te}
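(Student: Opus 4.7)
The plan has three ingredients: verify every element of $G^{(d)}_{p,q}$ is an $\mathfrak{so}_2$-invariant; check that $|G^{(d)}_{p,q}| = \dim W_d - 1$; and prove the elements are algebraically independent over $\mathbb{C}$. Combined with the previous theorem, which asserts that $\operatorname{tr.deg}_{\mathbb{C}}\mathbb{C}(W_d)^{\mathfrak{so}_2} = \dim W_d - 1$, these three facts immediately give the conclusion, because any algebraically independent subset of the right cardinality in a field of that transcendence degree is automatically a transcendence basis.

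Invariance and counting are routine. By Theorem~\ref{1}, each $\mathbi{e}_n(\lambda)$ is an eigenvector of $D$ with eigenvalue $\lambda$; since $D$ is a derivation, a monomial $\prod_i \mathbi{e}_{n_i}(\lambda_i)^{k_i}$ is an eigenvector of eigenvalue $\sum_i k_i\lambda_i$. The three families of generators yield the sums $0$, $si+(-si)=0$, and $q(si)+s(-qi)=0$, so every generator lies in $\ker D$. For the cardinality, the first family contributes $l_0=\lfloor d/2\rfloor$ generators, the second contributes $\sum_{n=2}^d\lceil n/2\rceil$ (one for each positive eigenvalue in each $\Lambda_n$), and the third, after excluding the single redundant pair $(n,s)=(p,q)$ (for which $\mathbi{e}_p(qi)^q\mathbi{e}_p(-qi)^q$ is merely a power of the corresponding second-family element), contributes $\sum_{n=2}^d\lceil n/2\rceil-1$; the total simplifies to $\sum_{n=2}^d(n+1)-1=\dim W_d-1$.

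The crux of the argument is algebraic independence. Since by Theorem~\ref{1} the eigenvectors $\{\mathbi{e}_n(\lambda):2\le n\le d,\ \lambda\in\Lambda_n\}$ form a basis of $W_d^*$, the algebra $\mathbb{C}[W_d]$ is polynomial in them and every element of $G^{(d)}_{p,q}$ is a monomial; hence algebraic independence reduces to $\mathbb{Q}$-linear independence of the exponent vectors $A_j=v_{2j,0}$, $B_{n,s}=v_{n,si}+v_{n,-si}$, and $C_{n,s}=q\,v_{n,si}+s\,v_{p,-qi}$, where $v_{n,\lambda}$ denotes the standard basis vector at position $(n,\lambda)$. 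I would dispose of a hypothetical linear relation $\sum a_jA_j+\sum b_{n,s}B_{n,s}+\sum c_{n,s}C_{n,s}=0$ coordinate by coordinate: the coordinate $(2j,0)$ gives $a_j=0$; each coordinate $(n,-si)$ with $(n,s)\ne(p,q)$ gives $b_{n,s}=0$; the coordinate $(p,qi)$, where no $C$-vector contributes, forces $b_{p,q}=0$; each coordinate $(n,si)$ with $(n,s)\ne(p,q)$ then forces $c_{n,s}=0$; and the residual equation at $(p,-qi)$ becomes trivial. The main obstacle lies precisely in this bookkeeping: the coordinate $(p,-qi)$ is hit by every single $C_{n,s}$, and one must verify that removing the unique generator $C_{p,q}$ does not create a hidden dependency among the remaining $C$-vectors. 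Once this is checked, the previous theorem on transcendence degree finishes the proof.
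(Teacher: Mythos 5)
Your proof is correct, but your key step --- algebraic independence --- is argued by a genuinely different method than the paper's. The paper realizes the invariants as first integrals of the characteristic ODE system of an auxiliary diagonal derivation $\mathcal{D}$ and then certifies their functional independence by exhibiting a nonvanishing $(2n-1)\times(2n-1)$ minor of the Jacobian matrix. You instead exploit the fact that, in the eigenbasis $\{\mathbi{e}_n(\lambda)\}$ of $W_d^*$, every element of $G^{(d)}_{p,q}$ is a monomial, so algebraic independence is equivalent to $\mathbb{Q}$-linear independence of the exponent vectors, which you verify coordinate by coordinate; the triangular elimination you describe (kill the $a_j$ at $(2j,0)$, the $b_{n,s}$ at $(n,-si)$ and at $(p,qi)$, then the $c_{n,s}$ at $(n,si)$) does settle the one point you flag, namely that dropping $C_{p,q}$ leaves no hidden relation among the remaining $C$-vectors, since each surviving $C_{n,s}$ is the only vector with a nonzero entry at $(n,si)$ once the $B$'s are gone. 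Your route is more elementary and sidesteps the paper's slightly awkward auxiliary expressions $x_k^{\lambda_q}y_q^{\lambda_k}$ with imaginary exponents; the paper's route has the compensating virtue of explaining where the third family of generators comes from (elimination of the parameter $t$ from the solved characteristic system). Both arguments, combined with the transcendence-degree count of the preceding theorem, yield the conclusion, since an algebraically independent set of invariants of cardinality $\dim W_d-1={\rm tr\,deg}_{\mathbb{C}}\,\mathbb{C}(W_d)^{\mathfrak{so}_2}$ is automatically a transcendence basis. One small reading issue you silently (and correctly) resolved: the condition ``$q\neq s$'' in the statement must be read as $(n,s)\neq(p,q)$, excluding only the single redundant pair --- otherwise the count would not come out to $\dim W_d-1$, and the paper's own example $\beta_{11}=x_{51}y_{31}$ for $d=5$ includes a pair with $s=q$.
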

\begin{proof} 

Let us consider  an  auxiliary derivation  $\mathcal{D}$ on  the polynomial field in the indeterminates $x_1,y_1,x_2,y_2,\ldots,x_n,y_n,z_1,z_2,\ldots,z_l$ with the following action 
$$
\mathcal{D}(x_j)=\lambda_j x_j, \mathcal{D} (y_j)=-\lambda_j y_j, \mathcal{D}(z_j)=0 \lambda_j \neq 0.
$$
Then its  kernel is exactly the same  as  solutions   $$u=u(x_1,x_2,\ldots,x_n,y_1,\ldots,y_n,z_1,z_2,\ldots,z_l),$$ of the partial differential equation 
$$
\lambda_1 x_1 \frac{\partial u}{\partial x_1}+\lambda_2 x_2 \frac{\partial u}{\partial x_2}+\cdots+\lambda_n x_n \frac{\partial u}{\partial x_n}-\lambda_1 y_1 \frac{\partial u}{\partial y_1}-\cdots-\lambda_n y_n \frac{\partial u}{\partial y_n}=0.
$$
As is well known, these  solutions are  first integrals of the system of differential equations
$$
\begin{cases}
\displaystyle \frac{dx_1}{\lambda_1 x_1}=\frac{dx_2}{\lambda_2 x_2}=\cdots=\frac{dx_n}{\lambda_n x_n}=-\frac{dy_1}{\lambda_1 y_1}=\cdots=-\frac{dy_n}{\lambda_n y_n},\\%  x_i=x_i(t)\\
d z_1=dz_2=\cdots=dz_l=0.
\end{cases}
$$
This system is easy to solve: 
$$
x_j=C'_j e^{\lambda_j t}, y_j=C''_j e^{-\lambda_j t}, z_k=C'''_j, i=1, \ldots,n, k=1, \ldots, l,
$$
for some constants $C'_j,C''_j,C'''_j.$
We eliminate the parameter  $t$ and get  $2n+l-1$ first integrals:
$$
x_j y_j=C_j, x_k^{\lambda_q}y_q^{\lambda_k}=C_{n-1+k}, z_m=C_{2n-1+m}, j,k \leq  n, k \neq q, m\leq l.
$$
Let us prove that the first integrals are algebraically  independent. 
We may assume, without loss of generality, that $q=1.$  We  need to check, see \cite{Kam}, that the rank of the jacobian $(2n \times 2n-1)$-matrix
$$
J=\begin{pmatrix}
y_{{1}}&0&0&0&\ldots&x_{{1}}&0&0&0\\ 
0&y_{{2}}&0&0&\ldots&0&x_{{2}}&0&0\\
0&0&y_{{3}}&0&\ldots&0&0&x_{{3}}&0\\
\ldots&\ldots&\ldots&\ldots&\ldots&\ldots&\ldots&\ldots&\ldots\\
0&0&\ldots&y_{{n}}&\ldots&0&0&\ldots&x_{{n}}\\
 0&\lambda_1 x_2^{\lambda_1-1}{y_{{1}}}^{\lambda_2}&0&0&\ldots&\lambda_2x_{{2}}^{\lambda_1}{y_{{1}}^{\lambda_2-1}}&0&\ldots&0\\ 
0&0&\lambda_1 x_3^{\lambda_1-1}{y_{{1}}}^{\lambda_3}&0&\ldots&\lambda_3 x_{{3}}^{\lambda_1}{y_{{1}}^{\lambda_3-1}}&0&\ldots&0\\
\ldots&\ldots&\ldots&\ldots&\ldots&\ldots&\ldots&\ldots &\ldots\\
0&0&\ldots&\lambda_1 x_n^{\lambda_1-1}{y_{{1}}}^{\lambda_n}&\ldots&\lambda_n x_{{n}}^{\lambda_1}{y_{{1}}^{\lambda_n-1}}&0&\ldots&0
\end{pmatrix}
$$
 equals  $2n-1.$  
Let us compute the  minor formed  by deleting the  $n+1$-th  column of $J.$ We have 
\begin{gather*}
\begin{vmatrix}
y_{{1}}&0&0&0&\ldots&0&0&0\\ 
0&y_{{2}}&0&0&\ldots&x_{{2}}&0&0\\
0&0&y_{{3}}&0&\ldots&0&x_{{3}}&0\\
\ldots&\ldots&\ldots&\ldots&\ldots&\ldots&\ldots&\ldots\\
0&0&\ldots&y_{{n}}&0&\ldots&0&x_{{n}}\\
 0&\lambda_1 x_2^{\lambda_1-1}{y_{{1}}}^{\lambda_2}&0&0&\ldots&0&\ldots&0\\ 
0&0&\lambda_1 x_3^{\lambda_1-1}{y_{{1}}}^{\lambda_3}&0&\ldots&0&\ldots&0\\
\ldots&\ldots&\ldots&\ldots&\ldots&\ldots&\ldots &\ldots\\
0&0&\ldots&\lambda_1 x_n^{\lambda_1-1}{y_{{1}}}^{\lambda_n}&\ldots&0&\ldots&0
\end{vmatrix}=\\=y_1 \begin{vmatrix}
y_{{2}}&0&0&\ldots&x_{{2}}&0&0\\
0&y_{{3}}&0&\ldots&0&x_{{3}}&0\\
\ldots&\ldots&\ldots&\ldots&\ldots&\ldots&\ldots\\
0&\ldots&y_{{n}}&0&\ldots&0&x_{{n}}\\
\lambda_1 x_2^{\lambda_1-1}{y_{{1}}}^{\lambda_2}&0&0&\ldots&0&\ldots&0\\ 
0&\lambda_1 x_3^{\lambda_1-1}{y_{{1}}}^{\lambda_3}&0&\ldots&0&\ldots&0\\
\ldots&\ldots&\ldots&\ldots&\ldots&\ldots &\ldots\\
0&\ldots&\lambda_1 x_n^{\lambda_1-1}{y_{{1}}}^{\lambda_n}&\ldots&0&\ldots&0
\end{vmatrix}=\\=
-y_1 \begin{vmatrix}
\lambda_1 x_2^{\lambda_1-1}{y_{{1}}}^{\lambda_2}&0&\ldots&0&\\
0&\lambda_1 x_3^{\lambda_1-1}{y_{{1}}}^{\lambda_3}&\ldots&0&\\
\ldots&\ldots&\ldots&\ldots\\
0&\ldots&0&\lambda_1 x_n^{\lambda_1-1}{y_{{1}}}^{\lambda_n}
\end{vmatrix}  \cdot \begin{vmatrix}
x_2&0&\ldots&0&\\
0&x_{3}&\ldots&0&\\
\ldots&\ldots&\ldots&\ldots\\
0&\ldots&0&x_{n}
\end{vmatrix} \neq 0.
\end{gather*}

Since  the rank of the matrix  $J$ equal  $2n-1$, then the following   $2n-1$ first integrals  
 $$
x_j y_j, x_k^{\lambda_q}y_q^{\lambda_k} j,k \leq  n, k \neq q.
$$
are algebraically independent.

Now, since the derivation  $D$ acts on $W_d$ in the same way as the derivation $\mathcal{D}$   
$$
D(\mathbi{e}_n(is))= is \mathbi{e}_n(is), D(\mathbi{e}_n(-is))= -is \mathbi{e}_n(is),D(\mathbi{e}_n(0))=0, i\,s \in \Lambda_n, n=2,3,\ldots,d,
$$
then,  taking into account  that all eigenvectors are linearly independent, we get that the set 
$$
G^{(d)}_{p,q}=\left\{\mathbi{e}_{2j}(0), \mathbi{e}_{n}(si)\mathbi{e}_{n}(-si), \mathbi{e}_n(si)^q{e_p(-q i)^{s}} \mid 2 \leq n \leq d, j \leq l_0, s i \in \Lambda_n,  s >0, q \neq s \right\},
$$
is a minimal generating set of the algebra rotation invariants  $\mathbb{C}(W_d)^{\mathfrak{so}_2}.$

\end{proof}

Note, that the result confirms the   result of Flusser \cite{FF}.

Consider some examples.

\begin{pr}{\rm  Let  $d=4$ and put $x_{nk}=e_n(ki)$ and  $y_{nk}=e_n(-ki)$  for  $k \geq 0.$
Put  $p=3,q=1.$
Then the minimal generating set $G^{(4)}_{3,1}$ of  the rotation  invariants of order 4 has the form

\begin{center}
\begin{tabular}{|l|l|l|}
\hline
Degree & Invariants & \#\\
\hline
1      &  $ \beta_{1}=x_{20}, \beta_2=x_{40}  $  &2      \\
\hline
2      &   $\beta_3 =x_{22}y_{22}$, $\beta_4 =x_{31}y_{31}$ $\beta_5 =x_{33}y_{33}$, $\beta_6 =x_{42}y_{42}$ $\beta_7 =x_{44}y_{44}$&5      \\
\hline
3      &    $ \beta_8=x_{22} y_{31}^2, \beta_{9}=x_{42} y_{31}^2,   {\phantom{2^{b}}}$      & 2\\
 \hline
4      &   $\beta_{10}=x_{33} y_{31}^3$      & 1  \\
\hline
5      &    $\beta_{11}=x_{44} y_{31}^4 {\phantom{2^{b}}}$ & 1 \\
\hline
\multicolumn{2}{|l|}{Total} &11 \\
\hline      
\end{tabular}
\end{center}

In \cite{F2000} another minimal generating set of $\mathbb{C}(W_3)^{\mathfrak{so}_2}$ are presented in terms of the complex moments $c_{p,q}$:
\begin{gather*}
\phi_1=c_{1,1},
\phi_2=c_{2,1}c_{1,2},
\phi_3=Re(c_{2,0} c_{1,2}^2),
\phi_4=Im(c_{2,0}c_{1,2}^2),
\phi_5=Re(c_{3,0}c_{1,2}^3),
\phi_6=Im(c_{3,0}c_{1,2}^3),\\
\phi_7=c_{2,2}, 
\phi_8=Re(c_{3,1}c_{1,2}^2),
\phi_9=Im(c_{3,1}c_{1,2}^2),
\phi_{10}=Re(c_{4,0}c_{1,2}^4),
\phi_{11}=Im(c_{4,0}c_{1,2}^4).
\end{gather*}

We can express these invariants in terms of the invariants  $\beta_j:$
\begin{align*}
&\phi_{{1}}=\beta_{{1}},\phi_{{2}}=\beta_{{4}},\phi_{{3}}=\frac{1}{2}\,{\frac {\beta_{{3}}{\beta_{{4}}}^{2}+{\beta_{{8}}}^{2}
}{\beta_{{8}}}},\phi_{{4}}={\frac{i}{2} \frac{\beta_{{3}}{\beta_{{4}}}^{2}-{\beta_{{8}}}^{2} }{\beta_{{8}}}}, \phi_{{5}}=\frac{1}{2}\,{\frac {\beta_{{5}}{\beta_{{4}}}^{3}+{\beta_{{10}}}^{2
}}{\beta_{{10}}}},\\
&\phi_{{6}}=\frac{i}{2}{\frac { \beta_{{5}}{\beta_{{4}}}^{3}-{\beta_{{10
}}}^{2} }{\beta_{{10}}}}, \phi_{{7}}=\beta_{{2}}, \phi_{{8}}=\frac{1}{2}\,{\frac {\beta_{{6}}{\beta_{{4}}}^{2}+{\beta_{{9}}}^{2}}{\beta_{{9}}}},\phi_{{9}}=-\frac{i}{2}{\frac { \beta_{{6}}{\beta_{{4}}}^{2}-{\beta_{{9
}}}^{2} }{\beta_{{9}}}},\\
&\phi_{{10}}=\frac{1}{2}\,{\frac {\beta_{{7}}{\beta_{{4}}}^{4}+{\beta_{{11}}}^{2}}{\beta_{{11}}}}, \phi_{{11}}=-\frac{i}{2}{\frac { \beta_{{7}}{\beta_{{4}}}^{4}-{\beta_{{
11}}}^{2}  }{\beta_{{11}}}}.
\end{align*}

}

\end{pr}

\begin{pr}{\rm  Let  $d=5$. Put  $x_{nk}=e_n(ki)$, $y_{nk}=e_n(-ki)$,  $k \geq 0$ and 
  $p=3,q=1.$
Then the  minimal generating system  $G^{(5)}_{3,1}$ consists of  17 elements:

\begin{center}
\begin{tabular}{|l|l|l|}
\hline
Degree & Invariants & \#\\
\hline
1      &  $ \beta_{1}=x_{20}, \beta_2=x_{40}  $  &2      \\
\hline
2      &   $\beta_3 =x_{22}y_{22}$, $\beta_4 =x_{31}y_{31}$ $\beta_5 =x_{33}y_{33}$, $\beta_6 =x_{42}y_{42}$ $\beta_7 =x_{44}y_{44}$&9     \\
& $\beta_8=x_{55}y_{55}, \beta_9=x_{53}y_{53}, \beta_{10}=x_{51}y_{51}, \beta_{11}=x_{51}y_{31}$   & \\
\hline
3      &    $ \beta_{12}=x_{22} y_{31}^2, \beta_{13}=x_{42} y_{31}^2  {\phantom{2^{b}}}$      & 2\\
 \hline
4      &   $\beta_{14}=x_{33} y_{31}^3, \beta_{15}=x_{53}y_{31}^3$      & 2  \\
\hline
5      &    $\beta_{16}=x_{44} y_{31}^4 {\phantom{2^{b}}}$ & 1 \\
\hline
6     &    $\beta_{17}=x_{55} y_{31}^5 {\phantom{2^{b}}}$ & 1 \\
\hline
\multicolumn{2}{|l|}{Total} &17 \\
\hline      
\end{tabular}
\end{center}

}

\end{pr}

The invariants of the first two types  $\mathbi{e}_{2j}(0),$ and $ \mathbi{e}_{n}(si)\mathbi{e}_{n}(-si)$  can be written explicitly.

\begin{te} 
$$
\begin{array}{ll}
(i) & \displaystyle \textbf{e}_{2j}(0)=\sum_{k=0}^{j} \binom{j}{k} a_{2j-2k, 2k}, 2j \leq d,\\
(ii) & \textbf{e}_{n}({si})\textbf{e}_{n}(-{si})=
\displaystyle \left(\sum_{j=0}^{\left[\frac{n}{2}\right]} (-1)^j \mathcal{K}_{2j}\left(\frac{1}{2}(n-s),n \right)a_{n-2j,2j}\right)^2+\\  &+\displaystyle \left( \sum_{j=0}^{\left[\frac{n}{2}\right]} (-1)^j \mathcal{K}_{2j+1}\left(\frac{1}{2}(n-s),n \right)a_{n-2j-1,2j+1} \right)^2.
\end{array}
$$
\end{te}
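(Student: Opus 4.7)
The plan is to use Theorem 3.1 as a black box and simply expand the eigenvector formula $\mathbi{e}_d(\lambda)=\sum_{j=0}^d i^j\mathcal{K}_j\!\left(\tfrac{1}{2}(d-\text{Im}(\lambda)),d\right)a_{d-j,j}$ in the two special situations required.

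For part $(i)$, set $d=2j$ and $\lambda=0$. The Kravchuk argument becomes $\tfrac{1}{2}(2j)=j$, so the coefficient of $a_{2j-k,k}$ is $i^k\mathcal{K}_k(j,2j)$. The key computation is to evaluate $\mathcal{K}_k(j,2j)$ by reading it off the generating function supplied in the proof of Theorem 3.1:
\begin{equation*}
\sum_{k\geq 0}\mathcal{K}_k(j,2j)\,z^k=(1+z)^{2j-j}(1-z)^{j}=(1-z^2)^{j}.
\end{equation*}
Hence $\mathcal{K}_k(j,2j)=0$ for $k$ odd, and $\mathcal{K}_{2m}(j,2j)=(-1)^m\binom{j}{m}$. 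Substituting, the odd terms vanish and the even terms give $i^{2m}(-1)^m\binom{j}{m}=\binom{j}{m}$, which is exactly the asserted formula.

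For part $(ii)$, the strategy is to decompose $\mathbi{e}_n(si)$ into its real and imaginary parts and then use the identity $\overline{\mathbi{e}_n(si)}=\mathbi{e}_n(-si)$ noted right after Theorem 3.1. Writing $c_j:=\mathcal{K}_j\!\left(\tfrac{1}{2}(n-s),n\right)\in\mathbb{R}$ and separating the sum over $j$ by parity, one obtains $\mathbi{e}_n(si)=A+iB$, where
\begin{align*}
A&=\sum_{m=0}^{[n/2]}(-1)^m\,\mathcal{K}_{2m}\!\left(\tfrac{1}{2}(n-s),n\right)a_{n-2m,2m},\\
B&=\sum_{m=0}^{[(n-1)/2]}(-1)^m\,\mathcal{K}_{2m+1}\!\left(\tfrac{1}{2}(n-s),n\right)a_{n-2m-1,2m+1},
\end{align*}
using $i^{2m}=(-1)^m$ and $i^{2m+1}=i(-1)^m$, together with the fact that the $a_{k,j}$ are real. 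Since $\overline{\mathbi{e}_n(si)}=\mathbi{e}_n(-si)$, we get $\mathbi{e}_n(si)\mathbi{e}_n(-si)=(A+iB)(A-iB)=A^2+B^2$, which is the claim.

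The only non-routine step is the closed-form evaluation of $\mathcal{K}_k(j,2j)$ in part $(i)$, but this is immediate from the Kravchuk generating function. The conjugation identity used in part $(ii)$ admits an equally short verification via the same generating function, namely $\sum_j(-1)^j\mathcal{K}_j\!\left(\tfrac{n-s}{2},n\right)z^j=(1-z)^{\frac{n+s}{2}}(1+z)^{\frac{n-s}{2}}=\sum_j\mathcal{K}_j\!\left(\tfrac{n+s}{2},n\right)z^j$, so no genuine obstacle remains.
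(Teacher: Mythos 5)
Your proof is correct and follows essentially the same route as the paper: expand the eigenvector formula from Theorem 3.1, evaluate $\mathcal{K}_k(j,2j)$ in closed form for part $(i)$, and split $\mathbi{e}_n(si)$ into real and imaginary parts and use $\overline{\mathbi{e}_n(si)}=\mathbi{e}_n(-si)$ for part $(ii)$. The only (harmless) divergence is that you obtain $\mathcal{K}_{2m}(j,2j)=(-1)^m\binom{j}{m}$ and the vanishing of the odd terms from the generating function $(1-z^2)^j$, whereas the paper derives the same evaluation from Gould's alternating Vandermonde convolution identity.
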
 
\begin{proof}
$(i)$ 
The   alternating Vandermonde's convolution formula, see \cite{G}, identity 3.32
$$
\sum_{k=0}^n(-1)^k{r\choose k}{r\choose n-k}=\begin{cases}0& n\text{ odd,}\\ \\ (-1)^{\frac{n}{2}}\begin{pmatrix} r \\ \frac{n}{2} \end{pmatrix}& n\text{ even,}\end{cases}
$$ 
implies the identity for the Kravchuk polynomials
\begin{gather*}
\mathcal{K}_n(j,2j)=\sum_{k=0}^n (-1)^k {j \choose k} {j \choose n-k}=\begin{cases} 0, \text{ $n$  odd}, \\ (-1)^{\frac{n}{2}} \begin{pmatrix} j \\ \frac{n}{2} \end{pmatrix}, \text{ $n$  even}. \end{cases}
\end{gather*}
Then
\begin{gather*}
\mathbi{e}_{2j}(0)=\sum_{k=0}^{2j} i^k \mathcal{K}_k\left(j,2j \right)a_{2j-k,k}=\sum_{k=0}^{j} i^{2k} \mathcal{K}_{2k}\left(j,2j \right)a_{2j-2k,2k}=\sum_{k=0}^{j} \binom{j}{k} a_{2j-2k, 2k}.
\end{gather*}
Note that this invariant for the first time was found in   \cite[page 358]{Ell}  and then was found again in  \cite[page 182]{Hu}.

$(ii)$ We have

\begin{gather*}
\mathbi{e}_{n}({si})=\sum_{j=0}^n i^j \mathcal{K}_j\left(\frac{1}{2}(n-s),n \right)a_{n-j,j}= \sum_{j=0}^{\left[\frac{n}{2}\right]} i^{2j} \mathcal{K}_{2j}\left(\frac{1}{2}(n-s),n \right)a_{n-2j,2j}+\\+\sum_{j=0}^{\left[\frac{n}{2}\right]} i^{2j+1} \mathcal{K}_{2j+1}\left(\frac{1}{2}(n-s),n \right)a_{n-2j-1,2j+1}=\\=\sum_{j=0}^{\left[\frac{n}{2}\right]} (-1)^j \mathcal{K}_{2j}\left(\frac{1}{2}(n-s),n \right)a_{n-2j,2j}+i \cdot \sum_{j=0}^{\left[\frac{n}{2}\right]} (-1)^j \mathcal{K}_{2j+1}\left(\frac{1}{2}(n-s),n \right)a_{n-2j-1,2j+1}.
\end{gather*}

Since $\mathbi{e}_{n}(-{si})=\overline{\mathbi{e}_{n}({si})}$, then 
\begin{gather*}
\mathbi{e}_{n}({si})\mathbi{e}_{n}(-{si})=Re(\mathbi{e}_{n}({si}))^2+Im(\mathbi{e}_{n}({si}))^2=\\=\left(\sum_{j=0}^{\left[\frac{n}{2}\right]} (-1)^j \mathcal{K}_{2j}\left(\frac{1}{2}(n-s),n \right)a_{n-2j,2j}\right)^2+\left( \sum_{j=0}^{\left[\frac{n}{2}\right]} (-1)^j \mathcal{K}_{2j+1}\left(\frac{1}{2}(n-s),n \right)a_{n-2j-1,2j+1} \right)^2.
\end{gather*}

\end{proof}


\begin{thebibliography}{50}

\bibitem{Ri}
 Richardson, J. M., Pattern recognition and group theory. In:  Watanable S.: ed,
Frontiers of Pattern Recognition,  Academic Press, New
York (1972), 453-477

\bibitem{Schm}
 Shvedov, A.,   Schmidt, A.,  Yakubovich V., Invariant
systems of features in pattern recognition, Automation
Remote Control 40, 131 - 142 (1979).

\bibitem{L}

Lenz, R.: Group Theoretic Methods in Image Processing. Springer-Verlag, New York: (1990)

\bibitem{KM}
Kinoshenko D., Mashtalir V., Orlov A., Yegorova E. (2003) Method of Creating of Functional Invariants under One-Parameter Geometric Image Transformations. In: Michaelis B., Krell G. (eds) Pattern Recognition. DAGM 2003. Lecture Notes in Computer Science, vol 2781. Springer, Berlin, Heidelberg
\bibitem{Hu}
M. K. Hu, Visual pattern recognition by moment invariants, IRE
Trans. Inform. Theory IT-8 1962, 179-187.

\bibitem{Te}
 Teague, M.,  Image analysis via the general theory of
moments,  J. Opt. Soc. Am.  70, 920-930 (1980).

\bibitem{MP}
Yaser S. A., Psaltis  D.,  Recognitive Aspects of Moment Invariants,IEEE Transactions on Pattern Analysis and Machine Intelligence,Volume 6 Issue 6, 1984,Pages 698-706

\bibitem{RR}
 Reiss, T.H., The revised fundamental theorem of moment
invariants, IEEE Trans. Pattern Anal. Mach. Intell. 13
(1991) 830-834.

\bibitem{Li}
 Li,Y., Reforming the theory of invariant moments for pattern recognition, Pattern Recognition
Volume 25, Issue 7,  1992, Pages 723-730

\bibitem{PR1}
 Prokop, R. J.,  Reeves, A. P., 
A survey of moment-based techniques for unoccluded object representation and recognition. 
CVGIP Graphical models and Image Processing, 54(5):pp. 438-460, 1992.

\bibitem{WSK}
 Wong, W.H.,  Siu, W.C.,  Lam, K.M.,  Generation of moment
invariantsand their uses for character recognition, Pattern
Recognition Lett. 16 (1995) 115-123

\bibitem{WK}
 Wallin, A.,  Kubler, O., Complete sets of complex Zernike
moment invariants and the role of the pseudoinvariants, IEEE Transactions on Pattern Analysis and Machine Intelligence,  Volume: 17 , Issue: 11 , Nov 1995, 1106 - 1110



\bibitem{MM}
Mamistvalov, A. G. (1998). n-dimensional moment invariants and conceptual mathematical theory of recognition n-dimensional solids. IEEE Transactions on Pattern Analysis and Machine Intelligence, 20(8), 819–831. doi:10.1109/34.709598 


\bibitem{FF} 
 Flusser, J.,  On the independence of rotation moment invariants, Pattern Recognition, vol. 33, pp. 1405-1410, 2000.

\bibitem{FS1} Flusser J., Suk T. (2003) Construction of Complete and Independent Systems of Rotation Moment Invariants. In: Petkov N., Westenberg M.A. (eds) Computer Analysis of Images and Patterns. CAIP 2003. Lecture Notes in Computer Science, vol 2756. Springer, Berlin, Heidelberg

\bibitem{JT}
Jin, L.,   Tianxu, Z., Fast algorithm for generation of moment invariants, Pattern Recognition 37 (2004) 1745–1756

\bibitem{FSB}
Flusser J., Suk T., Zitov\'a, B.,  2D and 3D Image Analysis by Moments,  2nd Edition,2017, John Wiley and Sons, Ltd, pp548 

\bibitem{Bool}
G. Boole, Exposition of a general theory of  linear transformation. Part  II, The Cambridge mathematical journal, 1843, p. 106--120,  % https://gdz.sub.uni-goettingen.de/id/PPN600493016_0003 

\bibitem{Ell}
Elliott, E. B., An introduction to the algebra of quantics, Clarendon Press, 1913, - 416 pages. 
\bibitem{FS}
Flusser J., Suk T.,  Pattern Recognition by Affine Moment Invariants , Pattern Recognition vol.26, 1 (1993), p. 167-174 [1993]




\bibitem{SF1}
 Flusser J., Suk T.,
The Independence of the Affine Moment Invariants
AIP Conference Proceedings 860, 387 (2006)






\bibitem{JS}
 Sylvester, J.J.,  Theoreme  sur les determinants  de M. Sylvester,  Nouo.  Ann.
Math. 13:305 (1854);  in  Collected  Mathematical  Papers,  Vol. 2, Cambridge  U.P.,
1908, p. 28.

\bibitem{EK}
Taussky O., Todd J., Another look at a matrix of Mark Kac, Linear Algebra and its Applications
Volume 150, ( 1991), Pages 341-360.



\bibitem{Is}
Ismail, M. (2005). Classical and Quantum Orthogonal Polynomials in One Variable (Encyclopedia of Mathematics and its Applications). Cambridge: Cambridge University



\bibitem{F2000}
Flusser, J.,  On the independence of rotation moment invariants , Pattern Recognition vol.33, 9 (2000), p. 1405-1410

%\bibitem{RS1} Valery G. Romanovski Douglas S. Shafer, The Center and Cyclicity,Problems: A Computational Algebra
%Approach,Birkh\"auser, 2009, 336p.

\bibitem{RS}
Stanley, R. , Combinatorics and commutative algebra, Birkhauser Boston, 2004

\bibitem{CL}
 Cox, D., Lectures on Toric Varieties. School on Commutative Algebra given in Hanoi,
www.amherst.edu/$\sim$dacox/, 2005.

\bibitem{CA}
Computer algebra system  CoCoA, avaiable at http://cocoa.dima.unige.it/

%\bibitem{DerK}
% Derksen H.,  Kemper G., Computational Invariant Theory, Springer-Verlag, New York,
%2002.




\bibitem{FH} Fulton, W.,    Harris, J., Reptesentation theory: a first course, 1991.


\bibitem{SP}
Springer, T., 1977. {Invariant theory.} {Lecture Notes in Mathematics. 585.
  Berlin-Heidelberg-New York: Springer-Verlag.} 120 p
	
	\bibitem{JF}
Flusser, J. , On the independence of rotation moment invariants, Pattern Recog-
nit. 33 (9) (2000) 1405–1410.

\bibitem{Sylv}
Sylvester, J.~J., {\it Sur la forme binaire du septi\'eme odre.} Comptes
  Rendus~(LXXXVII), 1878, 899--903.
\bibitem{Hilb}
Hilbert, D., Theory of algebraic invariants. Lectures. Cambridge
  Mathematical Library, Cambridge: Cambridge University Press,1993. 

\bibitem{G}
 Gould, H., Combinatorial identities, Morgantown Printing and Binding Co. 1972, 119 р.

\bibitem{Kam}
Kamke, E. 
Differentialgleichungen: L\"osungsmethoden und L\"osungen, Bd. 2: Partielle Differentialgleichungen Erster Ordnung f\"ur eine Gesuchte Funktion,  9. Aufl. Stuttgart, Germany: Teubner, 1983.
\end{thebibliography}
\end{document}